\newcommand{\cmark}{\ding{51}}%
\newcommand{\xmark}{\ding{55}}%
\newtheorem{theorem}{Theorem}
\newtheorem{lemma}{Lemma}
\newtheorem{example}{Example}
\newtheorem{remark}{Remark}
\newcommand{\Cc}{\mathsf{C}}
\newcommand{\Pc}{\mathsf{P}}
\newcommand{\Pf}{\mathfrak{P}}
\newcommand{\SOS}{\mathsf{SOS}}
\newcommand{\Ic}{\mathsf{I}}
\newcommand{\Fc}{\mathcal{F}}
\newcommand{\KL}{\mathsf{KL}}
\newcommand{\Zsf}{\mathsf{Z}}
\newcommand{\Xsf}{\mathsf{X}}
\newcommand{\av}{\mathbf{a}}
\newcommand{\xv}{\mathbf{x}}
\newcommand{\wv}{\mathbf{w}}
\newcommand{\rmd}{\mathrm{d}}
\newcommand{\zv}{\mathbf{z}}
\newcommand{\RR}{\mathds{R}}
\newcommand{\NN}{\mathds{N}}
\newcommand{\Nc}{\mathcal{N}}
\newcommand{\eg}{{e.g.}\xspace}
\newcommand{\vs}{{vs.}\xspace}
\newcommand{\ie}{{i.e.}\xspace}
\newcommand{\iid}{{i.i.d.}\xspace}
\newcommand{\wrt}{{w.r.t.}\xspace}
\newcommand{\Tb}{\mathbf{T}}
\newcommand{\Db}{\mathbf{D}}
\newcommand{\Lb}{\mathbf{L}}
\newcommand{\Tpush}{\Tb_{\#}}
\newcommand{\intr}{\mathop{\mathrm{int}}}
\newcommand{\cl}{\mathop{\mathrm{cl}}}
\newif\ifrev
\newcommand{\red}[1]{\textcolor{red}{#1}}
\newcommand{\id}{\mathop{id}}
\newcommand{\thetav}{\boldsymbol{\theta}}
\DeclareMathOperator\erf{erf}
\let\etoolboxforlistloop\forlistloop 
\let\forlistloop\etoolboxforlistloop 
\pgfplotsset{compat=1.14}
\icmltitlerunning{Sum-of-Squares Polynomial Flow}
\begin{document}

\twocolumn[
\icmltitle{Sum-of-Squares Polynomial Flow}




\begin{icmlauthorlist}
\icmlauthor{Priyank Jaini}{dc,waii,vec}
\icmlauthor{Kira A. Selby}{dc,waii}
\icmlauthor{Yaoliang Yu}{dc,waii}
\end{icmlauthorlist}

\icmlaffiliation{dc}{University of Waterloo, Waterloo, Canada}
\icmlaffiliation{waii}{Waterloo AI Institute, Waterloo, Canada}
\icmlaffiliation{vec}{Vector Institute, Toronto, Canada}

\icmlcorrespondingauthor{Yaoliang Yu}{yaoliang.yu@uwaterloo.ca}

\icmlkeywords{Triangular map, density estimation, universality, sum-of-squares}

\vskip 0.3in
]


\printAffiliationsAndNotice{}  

\begin{abstract}
Triangular map is a recent construct in probability theory that allows one to transform any source probability density function to any target density function. 
Based on triangular maps, we propose a general framework for high-dimensional density estimation, by specifying one-dimensional transformations (equivalently conditional densities) and appropriate conditioner networks. This framework (a) reveals the commonalities and differences of existing autoregressive and flow based methods, (b) allows a unified understanding of the limitations and representation power of these recent approaches and, (c) motivates us to uncover a new Sum-of-Squares (SOS) flow that is interpretable, universal, and easy to train. We perform several synthetic experiments on various density geometries to demonstrate the benefits (and short-comings) of such transformations. SOS flows achieve competitive results in simulations and several real-world datasets. 

\end{abstract}
\section{Introduction}
\label{sec:intro}

Neural density estimation methods are gaining popularity for the task of multivariate density estimation in machine learning \cite{KingmaSJCSW16, DinhKB15, DinhSDB17, PapamakariosPM17, UriaCGML16, HuangKLC18}. These generative models provide a tractable way to evaluate the exact density, unlike generative adversarial nets \cite{GoodfellowPMXFOCB14} or variational autoencoders \cite{KingmaWelling13, RezendeMW14}. Popular methods for neural density estimation are \emph{autoregressive models} \cite{Neal92,BengioBengio99,LarochelleMurray11,UriaCGML16} and \emph{normalizing flows} \cite{RezendeMohamed15,TabakVE10, TabakTurner13}. These models aim to learn an invertible, bijective and increasing transformation $\Tb$ that pushes forward a (simple) source probability density (or measure, in general) to a target density such that computing the inverse $\Tb^{-1}$ and the Jacobian $|\Tb'|$ is \emph{easy}.

 In probability theory, it has been rigorously proven that  increasing \emph{triangular} maps \cite{BogachevKM05} are universal, \ie any source density can be transformed into a target density using an increasing triangular map. Indeed, the Knothe-Rosenblatt transformation \citep[Ch.1,][]{Villani08} gives a (heuristic) construction of such a map, which is unique up to null sets \cite{BogachevKM05}. Furthermore, by definition the inverse and the Jacobian of a triangular map can be very efficiently computed through univariate operations. However, for multivariate densities computing the exact Knothe-Rosenblatt transform itself is not possible in practice. Thus, a natural question is: Given a pair of densities, how can we efficiently estimate this unique increasing triangular map? 
 
This work is devoted to studying these increasing, bijective, and monotonic triangular maps, in particular how to estimate them in practice. In \S\ref{sec:bg}, we precisely formulate the density estimation problem and propose a general maximum likelihood framework for estimating densities using triangular maps. We also explore the properties of the triangular map required to push a source density to a target density.  

Subsequently, in \S\ref{sec:prev}, we trace back the origins of the triangular map and connect it to many recent works on generative modelling. We relate our study of increasing, bijective, triangular maps to works on iterative Gaussianization \citep{ChenGopinath01,LaparraCVM11} and normalizing flows \citet{TabakVE10, TabakTurner13, RezendeMohamed15}. We show that a triangular map can be decomposed into compositions of one dimensional transformations or equivalently univariate conditional densities, allowing us to demonstrate that all \emph{autoregressive models} and \emph{normalizing flows} are subsumed in our general density estimation framework. As a by-product, this framework also reveals that \emph{autoregressive models} and \emph{normalizing flows} are in fact equivalent.  Using this unified framework, we study the commonalities and differences of the various aforementioned models. Most importantly, this framework allows us to study the universality in a much cleaner and more streamlined way. We present a unified understanding of the limitations and representation power of these approaches, summarized concisely in \Cref{tab: flows} below.

In \S\ref{sec:sos}, by understanding the pivotal properties of triangular maps and using our proposed framework, we uncover a new neural density estimation procedure called the Sum-of-Squares polynomial flows (SOS flows). We show that SOS flows are akin to higher order approximation of $\Tb$ depending on the degree of the polynomials used. Subsequently, we show that SOS flows are universal, \ie given enough model complexity, they can approximate any target density. We further show that (a) SOS flows are a strict generalization of the inverse autoregressive flow (IAF) of \citet{KingmaSJCSW16}, (b) they are interpretable; its coefficients directly control the higher order moments of the target density and, (c) SOS flows are easy to train; unlike NAFs \cite{HuangKLC18} which require non-negative weights, there are no constraints on the parameters of SOS.  

In \S\ref{sec:exp}, we report our empirical analysis. We performed holistic synthetic experiments to gain intuitive understanding of triangular maps and SOS flows in particular. Additionally, we compare SOS flows to previous neural density estimation methods on real-world datasets where it achieved competitive performance.

We summarize our main contributions as follows:
\begin{itemize}[itemsep=0pt, topsep=1pt]
    \item We study and propose a rigorous framework for using triangular maps for density estimation 
    \item Using this framework, we study the similarities and differences of existing flow based and autoregressive models
    \item We provide a unified understanding of the limitations and representational power of these methods
    \item We propose SOS flows that are universal, interpretable, and easy to train.
    \item We perform several synthetic and real-world experiments to demonstrate the efficacy of SOS flows. 
\end{itemize}
\vspace{-1em}
\section{Density estimation through triangular map}
\label{sec:bg}

In this section we set up our main problem, introduce key definitions and notations, and formulate the general approach to estimate density functions using triangular maps.


Let $p, q$ be two probability density\footnote{All of our results can be extended to two probability measures satisfying mild regularity conditions. For simplicity and concreteness we restrict to probability densities here.} functions (\wrt the Lebesgue measure) over the source domain $\Zsf\subseteq \RR^d$ and the target domain $\Xsf \subseteq\RR^d$, respectively. Our main goal is to find a \emph{deterministic} transformation $\Tb: \Zsf \to \Xsf$ such that for all (measurable) set $B \subseteq \Xsf$,
\begin{align}
\int_B q(\xv) \rmd\xv \approx \int_{\Tb^{-1}(B)} p(\zv) \rmd\zv.
\end{align}
In particular, when $\Tb$ is bijective and differentiable \citep[\eg][]{Rudin87}, we have the change-of-variable formula $\xv = \Tb(\zv)$ such that
\begin{align}
q(\xv) &= p(\zv) / |\Tb'(\zv)| \\
\label{eq:pf}&= p(\Tb^{-1}\xv) / |\Tb'(\Tb^{-1}\xv)|=: \Tpush p,
\end{align}
where $|\Tb'(\zv)|$ is the (absolute value) of the Jacobian (determinant of the derivative) of $\Tb$. In other words, by pushing the source random variable $\zv \sim p$ through the map $\Tb$ we can obtain a new random variable $\xv \sim q$. This ``push-forward'' idea has played an important role in optimal transport theory \cite{Villani08} and in recent Monte carlo simulations \cite{MarzoukMPS16,ParnoMarzouk18,PeherstorferMarzouk18}. 

Here, our interest is to learn the target density $q$ through the map $\Tb$. Let $\Fc$ be a class of mappings and use the KL divergence\footnote{Other statistical divergences can be used as well.} to measure closeness between densities. We can formulate the density estimation problem as:
\begin{align}
\label{eq:KL}
\min_{\Tb \in \Fc}~ \KL(q \| \Tpush p) \equiv -\int q(\xv) \log \frac{p(\Tb^{-1}\xv)}{|\Tb'(\Tb^{-1}\xv)|} \rmd \xv.
\end{align}
When we only have access to an \iid sample $\lbag \xv_1, \ldots, \xv_n \rbag\sim q$, we can replace the integral above with empirical averages, which amounts to maximum likelihood estimation: 
\begin{align}
\label{eq:ML}
\max_{\Tb \in \Fc}~~ \frac{1}{n}\sum_{i=1}^n \Big[-\log |\Tb'(\Tb^{-1}\xv_i)| + \log p(\Tb^{-1}\xv_i) \Big].
\end{align}
Conveniently, we can choose \emph{any} source density $p$ to facilitate estimation. Typical choices include the standard normal density on $\Zsf = \RR^d$ (with zero mean and identity covariance) and uniform density over the cube $\Zsf = [0,1]^d$.

Computationally, being able to solve \eqref{eq:ML} efficiently relies on choosing a map $\Tb$ whose
\begin{itemize}[itemsep=0pt,topsep=0pt]
	\item inverse $\Tb^{-1}$ is ``cheap'' to compute;
	\item Jacobian $|\Tb'|$ is ``cheap'' to compute.
\end{itemize}
Fortunately, this is always possible. Following \citet{BogachevKM05} we call a (vector-valued) mapping $\Tb: \RR^d \to \RR^d$ triangular if for all $j$, its $j$-th component $T_j$ only depends on the first $j$ variables $x_1, \ldots, x_j$. The name ``triangular'' is derived from the fact that the derivative of $\Tb$ is a triangular matrix function\footnote{The converse is clearly also true if our domain is  connected.}. We call $\Tb$ (strictly) increasing if for all $j\in[d]$, $T_j$ is (strictly) increasing \wrt the $j$-th variable $x_j$ when other variables are fixed.
\begin{theorem}[\citealt{BogachevKM05}]
	\label{thm:tri}
	For any two densities $p$ and $q$ over $\Zsf = \Xsf=\RR^d$, there exists a unique (up to null sets of $p$) \emph{increasing} triangular map $\Tb:\Zsf \to \Xsf$ so that $q = \Tpush p$. The same\footnote{More generally on any open or closed subset of $\RR^d$ if we interpret the monotonicity of $\Tb$ appropriately \citep{Alexandrova06}.} holds over $\Zsf = \Xsf = [0,1]^d$. 
\end{theorem}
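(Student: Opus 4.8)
The plan is to prove existence by the explicit Knothe--Rosenblatt construction and uniqueness by reducing both to the one-dimensional case via an induction on the dimension $d$. First I would settle $d=1$. Given densities $p,q$ on $\RR$ with cumulative distribution functions $F$ and $G$, the probability integral transform says that $F(z)\sim\mathrm{Unif}[0,1]$ when $z\sim p$, so setting $T = G^{-1}\circ F$, where $G^{-1}(u) = \inf\{t : G(t)\ge u\}$ is the quantile function, yields an increasing map with $\Tpush p = q$. Moreover an increasing map of the line is determined $p$-almost everywhere by its pushforward on the support of $p$: any other increasing $S$ with $S_{\#}p = q$ must agree with $T$ wherever $p>0$, which gives uniqueness up to $p$-null sets (the map is free to do anything on the region where $p$ vanishes). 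This one-dimensional statement is the engine for the whole argument.

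For general $d$ I would build $\Tb$ coordinate by coordinate using conditional distributions. Write the conditional cumulative distribution function of the $j$-th coordinate under $p$ given the first $j-1$ coordinates as $F_j(\,\cdot \mid z_1,\dots,z_{j-1})$, and similarly $G_j(\,\cdot\mid x_1,\dots,x_{j-1})$ under $q$. Define
\begin{align}
\label{eq:kr}
T_j(z_1,\dots,z_j) = G_j^{-1}\big(F_j(z_j\mid z_1,\dots,z_{j-1})\,\big|\,x_1,\dots,x_{j-1}\big),
\end{align}
where recursively $x_i = T_i(z_1,\dots,z_i)$. By construction $T_j$ depends only on $z_1,\dots,z_j$, so $\Tb$ is triangular, and $T_j$ is a composition of maps increasing in $z_j$, so $\Tb$ is increasing.

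To check $\Tpush p = q$ I would induct on $j$, proving that the first $j$ components of $\Tb$ push the $j$-dimensional marginal of $p$ forward to the $j$-dimensional marginal of $q$. The base case $j=1$ is the one-dimensional result above. For the inductive step I would disintegrate each density as a product of its $(j-1)$-marginal and its conditional in the $j$-th coordinate; the induction hypothesis handles the marginal factor, while \eqref{eq:kr} is exactly the one-dimensional transform matching the conditionals, so the product pushes forward correctly. Uniqueness follows by the same induction: if $\mathbf{S}$ is any increasing triangular map with $\mathbf{S}_{\#}p = q$, then $S_1$ is an increasing map matching the first marginals and hence equals $T_1$ $p$-a.e.; given agreement of the first $j-1$ components, $S_j(\,\cdot\mid z_1,\dots,z_{j-1})$ is an increasing map matching the conditionals and so equals $T_j$ for a.e.\ conditioning value, yielding agreement up to a $p$-null set.

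The main obstacle is measure-theoretic rather than algebraic: making the disintegration into conditional densities rigorous, ensuring the conditional quantile functions $G_j^{-1}(\,\cdot\mid\cdot)$ are jointly measurable in all arguments, and carefully tracking the null sets on which the densities vanish and the map is therefore undetermined (this is precisely the source of the ``up to null sets'' qualifier). The $[0,1]^d$ case is handled by the identical construction, with the conditional CDFs now supported on bounded intervals, provided monotonicity at the endpoints is interpreted appropriately as indicated in the footnote; the only additional care needed is at the boundary of the cube.
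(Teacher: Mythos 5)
Your proposal is correct in outline and follows essentially the same route the paper itself points to: \Cref{thm:tri} is imported from \citet{BogachevKM05} without proof, and the paper's \Cref{exm:univ} describes the result precisely as ``a rigorous iteration of this univariate argument by repeatedly conditioning,'' which is exactly the Knothe--Rosenblatt construction you carry out (univariate quantile transform $G^{-1}\circ F$, extended coordinate-by-coordinate via conditional CDFs, with existence and uniqueness both proved by induction on the dimension). The measure-theoretic gaps you flag --- rigorous disintegration, joint measurability of the conditional quantile functions, and tracking the null sets where the map is undetermined --- are precisely the technical points that the cited reference resolves, so your sketch matches the intended argument rather than diverging from it.
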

Conveniently, to compute the Jacobian of an increasing triangular map we need only multiply $d$ \emph{univariate} partial derivatives 
$
|\Tb'(\xv)| = \prod_{j=1}^d \frac{\partial T_j}{\partial x_j}. 
$
Similarly, inverting an increasing triangular map requires inverting $d$ \emph{univariate} functions sequentially, each of which can be efficiently done through say bisection. \citet{BogachevKM05} further proved that the change-of-variable formula \eqref{eq:pf} holds for any increasing triangular map $\Tb$ (without any additional assumption but using the right-side derivative).

Thus, triangular mappings form a very appealing function class for us to learn a target density as formulated in \eqref{eq:KL} and \eqref{eq:ML}. Indeed, \citet{MoselhyMarzouk12} already promoted a similar idea for Bayesian posterior inference and \citet{SpantiniBM18} related the sparsity of a triangular map with (conditional) independencies of the target density. Moreover, many recent generative models in machine learning are precisely special cases of this approach. Before we discuss these connections, let us give some examples to help understand \Cref{thm:tri}.

\begin{example}
Consider two probability densities $p$ and $q$ on the real line $\RR$, with distribution function $F$ and $G$, respectively. Then, we can define the increasing map $T = G^{-1} \circ F$ such that $q = \Tpush p$, where $G^{-1}: [0,1] \to \RR$ is the quantile function of $q$:
\begin{align}
G^{-1}(u) := \inf\{ t: G(t) \geq u \}.
\end{align}
Indeed, it is well-known that $F(Z)\sim \mathrm{uniform}$ if $Z \sim p$ and $G^{-1}(U) \sim q$ if $U\sim \mathrm{uniform}$. \Cref{thm:tri} is a rigorous iteration of this univariate argument by repeatedly conditioning. Note that the increasing property is essential for claiming the uniqueness of $\Tb$. Indeed, for instance, let $p$ be standard normal, then both $\Tb = \id$ and $\Tb=-\id$ push $p$ to the same target normal density.
\label{exm:univ}
\end{example}

\begin{example}[Pushing uniform to normal]
	Let $p$ be uniform over $[0,1]$ and $q\sim\Nc(\mu, \sigma^2)$ be normal distributed. The unique increasing transformation 
	\begin{align}
	T(z) &= G^{-1} \circ F = \mu + \sqrt{2}\sigma \cdot \erf^{-1}(2z-1)\\
	&= \mu+\sqrt{2}\sigma\cdot\sum_{k=0}^\infty \frac{\pi^{k+1/2}c_k}{2k+1} (z-\tfrac{1}{2})^{2k+1},
	\end{align}
	where $\erf(t) = \frac{2}{\sqrt{\pi}}\int_0^t e^{-s^2} \rmd s$ is the error function, which was Taylor expanded in the last equality. The coefficients $c_0=1$ and $c_k = \sum_{m=0}^{k-1} \frac{c_m c_{k-1-m}}{(m+1)(2m+1)}$. We observe that the derivative of $T$ is an infinite sum of squares of polynomials. In particular, if we truncate at $k=0$, we obtain
	\begin{align}
	\label{eq:unif2norm}
	T(z) = \mu + \sqrt{2\pi}\sigma(z-\tfrac{1}{2}) + O(z^3).
	\end{align}
	\vspace{-2em}
	\label{exm:unif2norm}
\end{example}

\begin{example}[Pushing normal to uniform]
	Similar as above but we now find a map $S$ that pushes $q$ to $p$:
	\begin{align}
	S(x) &= F^{-1} \circ G = \Phi(\tfrac{x-\mu}{\sigma}) \\
	&= \frac{1}{2} + \frac{1}{\sqrt{\pi}} \sum_{k=0}^\infty \frac{(-1)^k}{k! (2k+1)}\left(\frac{x-\mu}{\sqrt{2}\sigma}\right)^{2k+1},
	\end{align}
	where $\Phi$ is the cdf of standard normal. As shown by \citet{Medvedev08}, $S$ must be the inverse of the map $T$ in \Cref{exm:unif2norm}. We observe that the derivative of $S$ is no longer a sum of squares of polynomials, but we prove later that it is approximately so. If we truncate at $k=0$, we obtain
	\begin{align}
	S(x) = \frac{1}{2} + \frac{1}{\sqrt{2\pi\sigma}}(x-\mu) + O(x^3),
	\end{align}
	where the leading term is also the inverse of the leading term of $T$ in \eqref{eq:unif2norm}.
\label{exm:norm2unif}
\end{example}

We end this section with two important remarks. 
\begin{remark}
If the target density has disjoint support \eg mixture of Gaussians (MoGs) with well-separated components, then the resulting transformation will need to admit sharp jumps for areas of near zero mass. This follows by analyzing the transformaiton 	$T(z) = G^{-1} \circ F$. The slope $T'(z)$ of $T(z)$ is the ratio of the quantile pdfs of the source density and the target density. Therefore, in regions of near zero mass for target density, the transformation will have near infinite slope. In \Cref{sec : mog trans}, we demonstrate this phenomena specifically for well-separated MoGs and show that a piece-wise linear function transforms a standard Gaussian to MoGs. This also opens the possibility to use the number of jumps of an estimated transformation as the indication of the number of components in the data density. 
\label{rem:disc}
\end{remark}

\begin{remark}
So far we have employed the (increasing) triangular map $\Tb$ \emph{explicitly} to represent our estimate of the target density. This is advantageous since it allows us to easily draw samples from the estimated density, and, if needed, it results in the estimated density formula \eqref{eq:pf} immediately. An alternative would be to parameterize the estimated density directly and explicitly, such as in mixture models, probabilistic graphic models and sigmoid belief networks. The two approaches are conceptually \emph{equivalent}: Thanks to \Cref{thm:tri}, we know choosing a family of triangular maps fixes a family of target densities that we can represent, and conversely, choosing a family of target densities fixes a family of triangular maps that we can \emph{implicitly} learn. The advantage of the former approach is that given a sample from the target density, we can infer the ``pre-image'' in the source domain while this information is lost in the second approach.
\label{rem:equiv}
\end{remark}
\vspace{-2em}
\section{Connection to existing works}
\label{sec:prev}
The results in \Cref{sec:bg} suggest using \eqref{eq:ML} with $\Fc$ being a class of triangular maps for estimating a probability density $q$. In this section we put this general approach into historical perspective, and connect it to the many recent works on generative modelling. Due to space constraint, we limit our discussion to work that are directly relevant to ours.

{\bf Origins of triangular map}
: \citet{Rosenblatt52}, among his contemporary peers, used the triangular map to transform a continuous multivariate distribution into the uniform distribution over the cube. Independently, \citet{Knothe57} devised the triangular map to transform uniform distributions over convex bodies and to prove generalizations of the Brunn-Minkowski inequality. \citet{Talagrand96}, unaware of the previous two results and in the process of proving some sharp Gaussian concentration inequality, effectively discovered the triangular map that transforms the Gaussian distribution into any continuous distribution. The work of \citet{BogachevKM05} rigorously established the existence and uniqueness of the triangular map and systematically studied some of its key properties. \citet{CarlierGS10} showed surprisingly that the triangular map is the limit of solutions to a class of Monge-Kantorovich mass transportation problems under quadratic costs with diminishing weights. None of these pioneering works considered using triangular maps for density estimation.

{\bf Iterative Gaussianization and Normalizing Flow}
:
In his seminal work, \citet{Huber85} developed the important notion of non-Gaussianality to explain the projection pursuit algorithm of \citet{FriedmanSS84}. Later, \citet{ChenGopinath01}, based on a heuristic argument, discovered the triangular map approach for density estimation but deemed it impractical because of the seemingly impossible task of estimating too many conditional densities. Instead, \citet{ChenGopinath01} proposed the iterative Gaussianization technique, which essentially decomposes\footnote{This can be made precise, much in the same way as decomposing a triangular matrix into the product of two rotation matrices and a diagonal matrix, \ie the so-called Schur decomposition.} the triangular map into the composition of a sequence of alternating diagonal maps $\Db_t$ and linear maps $\Lb_t$. The diagonal maps are estimated using the univariate transform in \Cref{exm:univ} where $G$ is standard normal and $F$ is a mixture of standard normals. Later, \citet{LaparraCVM11} simplified the linear map into random rotations. Both approaches, however, suffer cubic complexity \wrt dimension due to generating or evaluating the linear map. The recent work of \citep{TabakVE10,TabakTurner13} coined the name \emph{normalizing flow} and further exploited the straightforward but crucial observation that we can approximate the triangular map through a sequence of ``simple'' maps such as radial basis functions or rotations composed with diagonal maps. Similar simple maps have also been explored in \citet{BalleLS16}. \citet{RezendeMohamed15} designed a ``rank-1'' (or radial) normalizing flow and applied it to variational inference, largely popularizing the idea in generative modelling. These approaches are not estimating a triangular map \emph{per se}, but the main ideas are nevertheless similar.

\begin{table*}[ht]
\caption{Various auto-regressive and flow-based methods expressed under a unified framework. All the conditioners can take inputs $\xv$ instead of $\zv$. The symbol \faShareAlt \; is used for weight sharing, $\mathghost$ for use of masks for efficient implementation, \faUniversity\ for universality of the method and, $\Delta$ if the method learns a triangular transformation explicitly (E) or implicitly (I). $?$ implies that universality of these methods has neither been proved or disproved although it can now be analyzed with ease using our framework. $S_j(z_j ; \thetav_j)$ is defined in \cref{eq:cond} and $\Pf_{2r+1}(z_j; \av_j)$ is defined in \cref{eq:sos}. }
\vspace{1em}
\centering
\begin{tabular}{c|c|c|c|c|c|c}
\toprule
 Model & conditioner $C_j$ output & $T_j\big(z_j~; C_j(z_1, \ldots, z_{j-1})\big)$ & \faShareAlt & $\mathghost$ & \faUniversity & $\Delta$ \\[3pt]
 \midrule
 Mixture \citep[\eg][]{McLachlanPeel04} & $\thetav_j$ & $S_j(z_j; \thetav_j)$ &\xmark & \xmark & \cmark & I \\
 \citep{BengioBengio99} & $\thetav_j(z_{<j})$ & $S_j(z_j; \thetav_j)$ & \xmark & \xmark &? & I \\
  MADE \citep{GermainGML15} & $\thetav_j(z_{<j})$ & $S_j(z_j; \thetav_j)$ & \cmark & \cmark &? & I \\[3pt]
 NICE \citep{DinhKB15} &  $\mu_j(z_{<l})$& $z_j + \mu_j\cdot \mathbf{1}_{j \not\in [l]}$  & \xmark & \xmark & ? & E \\[3pt]
 NADE \citep{UriaCGML16} & $\thetav_j(z_{<j})$ & $S_j(z_j; \thetav_j)$ & \cmark & \xmark & ? & I \\[3pt]
IAF \citep{KingmaSJCSW16} & $\sigma_j(z_{<j}), ~\mu_j(z_{<j})$  & $\sigma_j z_j + (1-\sigma_j)\mu_j$ & \cmark & \cmark & ? & E \\[3pt]
 MAF \citep{PapamakariosPM17} & $\alpha_j(z_{<j}), ~\mu_j(z_{<j})$  & $z_j \exp(\alpha_j) + \mu_j$  & \cmark & \cmark & ? & E \\[3pt]
 Real-NVP \citep{DinhSDB17} & $\alpha_j(z_{<l})$, $\mu_j(z_{<l})$ & $\exp(\alpha_j \cdot  \mathbf{1}_{j \not\in [l]}) \cdot  z_j  + \mu_j \cdot  \mathbf{1}_{j \not\in [l]}$ & \xmark & \xmark & ? & E \\[3pt]
 NAF \citep{HuangKLC18} & $\wv_j(z_{<j})$ & DNN($z_j~; \wv_j$) & \cmark & \cmark & \cmark & E \\[3pt]
 \midrule
 SOS & $\av_j(z_{<j})$ & $\Pf_{2r+1}(z_j; \av_j) $ & \cmark & \cmark & \cmark & E \\[3pt]
 \bottomrule
\end{tabular}
\label{tab: flows}
\end{table*}

(\emph{Bona fide}) {\bf Triangular Approach}:
\citet{DecoBrauer95} (see also \citet{Redlich93}), to our best knowledge, is among the first to mention the name ``triangular'' \emph{explicitly} in tasks  (nonlinear independent component analysis) related to density estimation. More recently, \citet{DinhKB15} recognized the promise of even simple triangular maps in density estimation. The (increasing) triangular map in \citep{DinhKB15} consists of two simple (block) components: $T_1(\xv_1) = \xv_1$ and $T_2(\xv_1, \xv_2) = \xv_2 + m(\xv_1)$, where $\xv = (\xv_1, \xv_2)$ is a two-block partition and $m$ is a map parameterized by a neural net. The advantage of this triangular map is its computational convenience: its Jacobian is trivially 1 and its inversion only requires evaluating $m$. \citet{DinhKB15} applied different partitions of variables, iteratively composed several such simple triangular maps and combined with a diagonal linear map\footnote{They also considered a more general coupling that may no longer be triangular.}. However, these triangular maps appear to be too simple and it is not clear if through composition they can approximate any increasing triangular map. In subsequent work, \citet{DinhSDB17} proposed the extension where $T_1(\xv_1) = \xv_1$ but $T_2(\xv_1, \xv_2) = \xv_2 \odot \exp(s(\xv_1)) + m(\xv_1)$, where $\odot$ denotes the element-wise product. This map is again increasing triangular. \citet{MoselhyMarzouk12} employed triangular maps for Bayesian posterior inference, which was further extended in \citep{MarzoukMPS16} for sampling from an (unknown) target density. One of their formulations is  essentially the same as our \cref{eq:KL}.

{\bf Autoregressive Neural Models}: A joint probability density function can be factorized into the product of marginal and conditionals: 
\begin{align}
\textstyle
q(x_1, \ldots, x_d) = q(x_1) \prod_{j=2}^d q(x_j | x_{j-1}, \ldots, x_1).
\end{align}
In his seminal work, \citet{Neal92} proposed to model each (discrete) conditional density by a simple linear logistic function (with the conditioned variables as inputs). This was later extended by \citet{BengioBengio99} using a two-layer nonlinear neural net. The recent work of \citet{UriaCGML16} proposed to decouple the hidden layers in \citet{BengioBengio99} and to introduce heavy weight sharing to reduce overfitting and computational complexity. Already in \citep{BengioBengio99}, univariate mixture models were mentioned as a possibility to model each conditional density, which was further substantiated in \citep{UriaCGML16}.
More precisely, they model the $j$-th conditional density as:
\begin{align}
\label{eq:mix}
q(x_j|x_{j-1}, \ldots, x_1) = \sum_{\kappa=1}^k w_{j, \kappa}~ \Nc(x_j; \mu_{j, \kappa}, \sigma_{j,\kappa}) \\
\thetav_j := (w_{j,\kappa}, \mu_{j,\kappa}, \sigma_{j,\kappa})_{\kappa=1}^k = C_{j}(x_{j-1}, \ldots, x_1),
\end{align}
where $C_j$ is the so-called conditioner network that outputs the parameters for the (univariate) mixture distribution in \eqref{eq:mix}. According to \Cref{exm:univ} there exists a unique increasing map $S_j(\cdot~; \thetav_j)$ that maps a univariate standard normal random variable $z_j$ into $x_j$ that follows \eqref{eq:mix}. In other words, 
\begin{align}
\label{eq:cond}
x_j = S_j(z_j; \thetav_j) = : T_j(z_1, \ldots, z_{j-1}, z_j),
\end{align}
where the last equality follows from induction, using the fact that $\thetav_j = C_j(x_{j-1}, \ldots, x_1)$. Thus, as already pointed out in \Cref{rem:equiv}, specifying a family of conditional densities as in \eqref{eq:mix} is \textbf{equivalent} as (implicitly) specifying a family of triangular maps. In particular, if we use a nonparametric family such as mixture of normals, then the induced triangular maps can approximate any increasing triangular map. The special case, when $k=1$ in \eqref{eq:mix}, was essentially dealt with by \citet{KingmaSJCSW16}: for $k=1$ the map $S_j(\thetav_j) = \mu_j + \sigma_j z_j$ hence the triangular map 
\begin{align}
\label{eq:affine}
T_j(z_1, \ldots, z_{j-1}, z_j) = \mu_j(z_{<j}) +\sigma_j(z_{<j}) \cdot z_j.
\end{align} 
Obviously, not every triangular map can be written in the form \eqref{eq:affine}, which is affine in $z_j$ when $z_{<j}$ are fixed. To address this issue, \citet{KingmaSJCSW16} composed several triangular maps in the form of \eqref{eq:affine}, hoping this suffices to approximate a generic triangular map. In contrast, \citet{HuangKLC18} proposed to replace the affine form in \eqref{eq:affine} with a univariate neural net (with $z_j$ as input and $\mu_j$ and $\sigma_j$ serve as weights). Lastly, based on binary masks, \citet{GermainGML15} and \citet{PapamakariosPM17} proposed efficient implementations of the above that compute all parameters in a single pass of the conditioner network. It should be clear now that
(a) autoregressive models implement exactly a triangular map;
(b) specifying the conditional densities directly is equivalent as specifying a triangular map explicitly.

{\bf Other Variants.} Recurrent nets have also been used in autoregressive models (effectively triangular maps). For instance, \citet{OOrdKK16} used LSTMs to directly specify the conditional densities while \citet{MacKayVBG18} chose to explicitly specify the triangular maps. The two approaches, as alluded above, are equivalent, although one may be more efficient in certain applications than the other. \citet{OlivaDZPSXS18} tried to combine both while \citet{KingmaDhariwal18} used an invertible $1\times 1$ convolution. We note that the work of \citet{OstrovskiDM18} models the conditional quantile function, which is equivalent to but can sometimes be more convenient than the conditional density. 

{\bf Non-Triangular Flows.} Sylvester Normalizing Flows (SNF) \cite{BergHTW18} and FFJORD \cite{GrathwohlCBSD18} are examples of normalizing flows that employ non-triangular maps. They both propose efficient methods to compute the Jacobian for change of variables. SNF utilizes Sylvester's determinant theorem for that purpose. FFJORD, on the other hand, defines a generative model based on continuous-time normalizing flows proposed by \citet{ChenRBD18}
and evaluates the log-density 
efficiently using Hutchinson's trace estimator.      
  
\vspace{-1em}
\section{Sum-of-Squares Polynomial Flow}
\label{sec:sos}

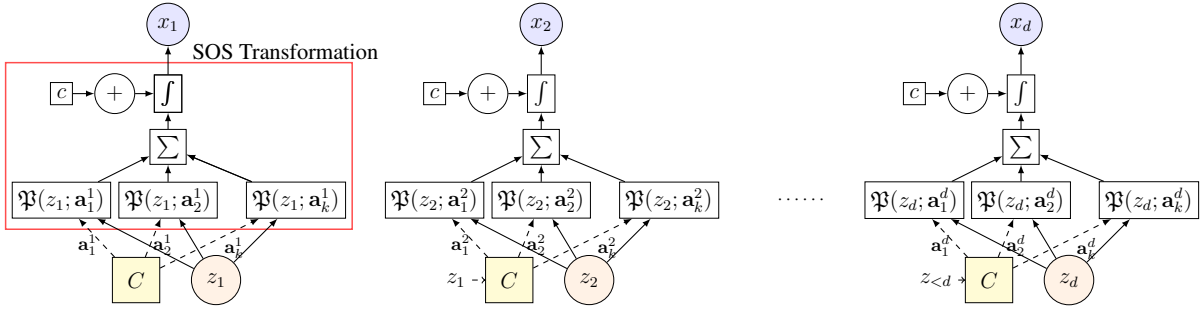
\begin{figure*}[t]
    \centering
    \resizebox{16cm}{!}{
    \begin{tikzpicture}
\node[draw, font=\large, fill=yellow!20, inner sep=0.8em] at (0,0) (z1){$C$};
\node[draw, font=\large] at ($(z1) - (1.4, -1.5)$) (p1){$\mathfrak{P}(z_1;\av_1^1)$};
\node[draw, font=\large] at ($(z1) - (-0.6, -1.5)$) (p2){$\mathfrak{P}(z_1;\av_2^1)$};
\node[draw, font=\large] at ($(z1) - (-3, -1.5)$) (pk){$\mathfrak{P}(z_1;\av_k^1)$};
\node[draw, shape=circle, font=\large, fill=orange!10, inner sep=0.5em] at ($(z1) - (-1.5,0)$) (c1){$z_1$};
\node[draw, font=\large] at ($(p2) - (0, -1)$) (sum){$\sum$};
\node[draw, font=\large] at ($(sum) - (0, -1)$) (int){$\int$};
\node[draw, font=\large] at ($(sum) - (0, -1)$) (int){$\int$};
\node[draw, shape=circle, font=\large] at ($(int) - (1, 0)$) (cons){$+$};
\node[draw, font=\large] at ($(cons) - (1, 0)$) (l){$c$};
\node[draw, shape=circle, fill = blue!10, font=\large] at ($(int) - (0, -1.3)$) (x1){$x_1$};
\node[draw=none, font=\large] at ($(x1) - (-2.2,0.5)$) (sosos){SOS Transformation};
\node[draw, red!70, fit=(p1)(p2)(pk)(sum)(int)(cons)(l), inner ysep = 2mm, thick](box){};
\draw[->, dashed, line width=0.2mm, -latex](z1)--(p1)node[near end, below]{$\av_1^1$};
\draw[->, dashed, line width=0.2mm, -latex](z1)--(p2)node[near end, below, xshift=1.1mm]{$\av_2^1$};
\draw[->, dashed, line width=0.2mm, -latex](z1)--(pk)node[near end, below]{$\av_k^1$};
\draw[->, line width=0.2mm, -latex](c1)--(p1);
\draw[->,line width=0.2mm, -latex](c1)--(p2);
\draw[->, line width=0.2mm, -latex](c1)--(pk);
\draw[->, line width=0.2mm, -latex](p1)--(sum);
\draw[->, line width=0.2mm, -latex](p2)--(sum);
\draw[->, line width=0.2mm, -latex](pk)--(sum);
\draw[->, line width=0.2mm, -latex](pk)--(sum);
\draw[->, line width=0.2mm, -latex](sum)--(int);
\draw[->, line width=0.2mm, -latex](int)--(x1);
\draw[->, line width=0.2mm, -latex](l)--(cons);
\draw[->, line width=0.2mm, -latex](cons)--(int);

\node[draw, font=\large, fill=yellow!20, inner sep=0.8em] at ($(z1) - (-7,0)$) (c2){$C$};
\node[draw, font=\large] at ($(c2) - (1.4, -1.5)$) (p12){$\mathfrak{P}(z_2;\av_1^2)$};
\node[draw, font=\large] at ($(c2) - (-0.6, -1.5)$) (p22){$\mathfrak{P}(z_2;\av_2^2)$};
\node[draw, font=\large] at ($(c2) - (-3, -1.5)$) (pk2){$\mathfrak{P}(z_2;\av_k^2)$};
\node[draw, shape=circle, font=\large, fill=orange!10, inner sep=0.5em] at ($(c2) - (-1.5,0)$) (z2){$z_2$};
\node[draw, font=\large] at ($(p22) - (0, -1)$) (sum2){$\sum$};
\node[draw, font=\large] at ($(sum2) - (0, -1)$) (int2){$\int$};
\node[draw, shape=circle, font=\large] at ($(int2) - (1, 0)$) (cons2){$+$};
\node[draw, font=\large] at ($(cons2) - (1, 0)$) (l2){$c$};
\node[draw, shape=circle, fill = blue!10, font=\large] at ($(int2) - (0, -1.3)$) (x2){$x_2$};
\node[draw=none, font=\large] at ($(c2) - (1, 0)$) (z11){$z_1$};

\draw[->, dashed, line width=0.2mm, -latex](c2)--(p12)node[near end, below]{$\av_1^2$};
\draw[->, dashed, line width=0.2mm, -latex](c2)--(p22)node[near end, below, xshift=1.2mm]{$\av_2^2$};
\draw[->, dashed, line width=0.2mm, -latex](c2)--(pk2)node[near end, below]{$\av_k^2$};
\draw[->, line width=0.2mm, -latex](z2)--(p12);
\draw[->,line width=0.2mm, -latex](z2)--(p22);
\draw[->, line width=0.2mm, -latex](z2)--(pk2);
\draw[->, line width=0.2mm, -latex](p12)--(sum2);
\draw[->, line width=0.2mm, -latex](p22)--(sum2);
\draw[->, line width=0.2mm, -latex](pk2)--(sum2);
\draw[->, line width=0.2mm, -latex](sum2)--(int2);
\draw[->, line width=0.2mm, -latex](int2)--(x2);
\draw[->, line width=0.2mm, -latex](l2)--(cons2);
\draw[->, line width=0.2mm, -latex](cons2)--(int2);
\draw[->, dashed] (z11) -- (c2);

\node[draw, font=\large, fill=yellow!20, inner sep=0.8em] at ($(c2) - (-9,0)$) (c3){$C$};
\node[draw=none] at ($(c2) - (-5.5,-1.5)$){$\cdots\cdots$};
\node[draw, font=\large] at ($(c3) - (1.4, -1.5)$) (p13){$\mathfrak{P}(z_d;\av_1^d)$};
\node[draw, font=\large] at ($(c3) - (-0.6, -1.5)$) (p23){$\mathfrak{P}(z_d;\av_2^d)$};
\node[draw, font=\large] at ($(c3) - (-3, -1.5)$) (pk3){$\mathfrak{P}(z_d;\av_k^d)$};
\node[draw, shape=circle, font=\large, fill=orange!10, inner sep=0.5em] at ($(c3) - (-1.5,0)$) (z3){$z_d$};
\node[draw, font=\large] at ($(p23) - (0, -1)$) (sum3){$\sum$};
\node[draw, font=\large] at ($(sum3) - (0, -1)$) (int3){$\int$};
\node[draw, shape=circle, font=\large] at ($(int3) - (1, 0)$) (cons3){$+$};
\node[draw, font=\large] at ($(cons3) - (1, 0)$) (l3){$c$};
\node[draw, shape=circle, fill = blue!10, font=\large] at ($(int3) - (0, -1.3)$) (x3){$x_d$};

\draw[->, dashed, line width=0.2mm, -latex](c3)--(p13)node[near end, below]{$\av_1^d$};
\draw[->, dashed, line width=0.2mm, -latex](c3)--(p23)node[near end, below, xshift=1.2mm]{$\av_2^d$};
\draw[->, dashed, line width=0.2mm, -latex](c3)--(pk3)node[near end, below]{$\av_k^d$};
\draw[->, line width=0.2mm, -latex](z3)--(p13);
\draw[->,line width=0.2mm, -latex](z3)--(p23);
\draw[->, line width=0.2mm, -latex](z3)--(pk3);
\draw[->, line width=0.2mm, -latex](p13)--(sum3);
\draw[->, line width=0.2mm, -latex](p23)--(sum3);
\draw[->, line width=0.2mm, -latex](pk3)--(sum3);
\draw[->, line width=0.2mm, -latex](sum3)--(int3);
\draw[->, line width=0.2mm, -latex](int3)--(x3);
\draw[->, line width=0.2mm, -latex](l3)--(cons3);
\draw[->, line width=0.2mm, -latex](cons3)--(int3);
\node[draw=none, font=\large] at ($(c3) - (1, 0)$) (z1d){$z_{<d}$};
\draw[->, dashed] (z1d) -- (c3);
\end{tikzpicture}
    }
    \caption{Schematic of SOS flows depicting the conditioner network and relevant transformations. \Cref{fig:SOS_stack} shows the schematic for SOS Flows by stacking multiple blocks of SOS transformation.}
    \label{fig:SOS_single}
\end{figure*}

In \Cref{sec:bg} we developed a general framework for density estimation using triangular
maps, and in \Cref{sec:prev} we showed the many recent generative models are all trying to estimate a triangular map in one way or another. In this section we give a surprisingly simple way to parameterize triangular maps, which, when plugged into \eqref{eq:ML}, leads to a new density estimation algorithm that we call sum-of-squares (SOS) polynomial flow.

Our approach is motivated by some classical result on simulating univariate non-normal distributions. Let $z$ be univariate standard normal. \citet{Fleishman78} proposed to simulate a non-normal distribution by fitting a degree-3 polynomial:
\begin{align}
x = \Pf_3(z; \av) = a_0 + a_1 z + a_2 z^2 + a_3 z^3,
\end{align}
where the coefficients $\{a_l\}$ are estimated by matching the first 4 moments of $x$ with those of empirical data. This approach was quite popular in practice because it allows researchers to precisely control the moments (such as skewness and kurtosis). However, three difficulties remain: (1) with degree-3 polynomial one can only (approximately) simulate a (very) strict subset of non-normal distributions. This can be addressed by using polynomials of higher degrees and better quantile matching techniques \citep{Headrick09}. (2) The estimated coefficients $\{a_l\}$ may not guarantee the monotonicity of the polynomial, making inversion and density evaluation difficult, if not impossible. (3) Extension to the multivariate case was done through composing a linear map \citep{ValeMaurelli83}, which can be quite inefficient.

We show that all three difficulties can be overcome using SOS flows. First, let us recall a classic result in algebra:
\begin{theorem}
A univariate real polynomial is increasing iff it can be written as:
\begin{align}
\label{eq:sos}
\Pf_{2r+1}(z; \av) = c + \int_{0}^z \sum_{\kappa=1}^k \left( \sum_{l=0}^r a_{l, \kappa} u^l \right)^2 \rmd u,
\end{align}
where $c\in\RR$, $r\in \NN$, and $k$ can be chosen as small as 2.
\label{thm:sos}
\end{theorem}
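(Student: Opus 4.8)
The plan is to reduce the whole statement to the classical algebraic fact that a univariate real polynomial is nonnegative on all of $\RR$ if and only if it is a sum of two squares, applied to the \emph{derivative} rather than to $P$ itself. The ``if'' direction is immediate: any function of the form \eqref{eq:sos} has derivative $\sum_{\kappa=1}^k \bigl(\sum_{l=0}^r a_{l,\kappa} z^l\bigr)^2 \ge 0$, so by the fundamental theorem of calculus it is (weakly) increasing. So the real content is the converse.

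For the converse, suppose $P$ is an increasing polynomial. Then $P' \ge 0$ on all of $\RR$, and since an increasing nonconstant polynomial must have odd degree with positive leading coefficient, we may write $\deg P = 2r+1$, so that $Q := P'$ is a nonnegative polynomial of even degree $2r$. Setting $c = P(0)$ and using $P(z) = c + \int_0^z Q(u)\,\rmd u$, it remains only to express $Q$ as a sum of squares of polynomials, each of degree at most $r$; reading off the coefficients $a_{l,\kappa}$ from those squares then yields the stated form. The heart of the matter is therefore the lemma that a nonnegative univariate polynomial $Q$ of degree $2r$ is a sum of \emph{two} squares. I would prove this via the fundamental theorem of algebra: factor $Q$ over $\mathbb{C}$, and use nonnegativity to argue that every real root has even multiplicity (an odd multiplicity would force a sign change) while the nonreal roots occur in conjugate pairs (automatic since $Q$ has real coefficients). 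Collecting the real roots into a perfect square and pairing each nonreal root $\zeta$ with $\bar\zeta$, and absorbing the square root of the positive leading coefficient, one obtains $Q(u) = |R(u)|^2$ for a single complex polynomial $R$ of degree $r$. Splitting $R = A + iB$ into real and imaginary parts gives $Q(u) = A(u)^2 + B(u)^2$, exactly $k=2$ squares, which also establishes the final clause that $k$ can always be taken to be $2$.

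The hard part will be this factorization step, and in particular the two nonnegativity consequences it rests on: that each real root has even multiplicity and that the leading coefficient is positive. Both are needed to guarantee $Q \ge 0$ on all of $\RR$ (not merely on a half-line), and I would obtain them from a local sign analysis near each real root together with the behaviour of $Q$ as $u \to \pm\infty$. Everything else --- the reduction through the derivative, the integration to recover $c = P(0)$, and the real/imaginary splitting of $R$ --- is routine, so I would keep those brief and concentrate the exposition on the sum-of-two-squares decomposition.
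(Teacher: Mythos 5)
Your proposal is correct and takes essentially the same approach as the paper: the paper likewise reduces the statement to integrating the nonnegative derivative $P'$ and invokes the classical fact that a nonnegative univariate polynomial is a sum of (at most two) squares, which it simply cites from Marshall (2008) rather than proving. Your complex-factorization argument (even multiplicity of real roots, conjugate pairing, $Q = |R|^2 = A^2 + B^2$) is exactly the standard proof of that cited fact, so you have merely made the paper's one-line proof self-contained.
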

Note that a univariate increasing polynomial is strictly increasing iff it is not a constant. \Cref{thm:sos} is obtained by integrating a nonnegative polynomial, which is necessarily a sum-of-squares, see \eg \citep{Marshall08}.
Now, by applying \eqref{eq:sos} to model each conditional density in \eqref{eq:cond} we effectively addressed the last two issues above. Pleasantly, this approach strictly generalizes the affine triangular map \eqref{eq:affine} of \citep{KingmaSJCSW16}, which amounts to truncating $r = 0$ in \eqref{eq:sos}. However, by using a larger $r$, we can learn certain densities more faithfully (especially for capturing higher order statistics), without significantly increasing the computational complexity. Additionally, implementing \eqref{eq:sos} in practice is simple: It can be computed \emph{exactly} since it is an integral of univariate polynomials.

Lastly, we prove that as the degree $r$ increases, we can approximate any triangular map. We prove our result for the domain $\Zsf = \Xsf = \RR^d$, but the same result holds for other domains if we slightly modify the proof in the appendix.
\begin{restatable}{theorem}{SOS}
Let $\Cc$ be the space of real univariate continuous functions, equipped with the topology of compact convergence. Then, the set of increasing polynomials is dense in the cone of increasing continuous functions.
\label{thm:univ}
\end{restatable}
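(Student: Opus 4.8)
The plan is to reduce the problem to a derivative-level approximation and exploit the characterization in \Cref{thm:sos}: an increasing polynomial is exactly the integral of a globally nonnegative polynomial (a sum of squares). The naive idea---invoke the Weierstrass approximation theorem to approximate an increasing continuous function $f$ directly by a polynomial---fails, because the approximating polynomial need not be monotone. So instead I would approximate a regularization of $f'$ by a polynomial that is nonnegative on all of $\RR$, and then integrate.

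First I would smooth $f$. Fixing a nonnegative mollifier $\phi_\delta$ of unit integral supported in $[-\delta,\delta]$, the convolution $f_\delta := f * \phi_\delta$ is $C^\infty$, and it remains increasing because for $s<t$,
\[
f_\delta(t) - f_\delta(s) = \int \big( f(t-y) - f(s-y) \big)\, \phi_\delta(y)\, \rmd y \ge 0.
\]
Moreover $f_\delta \to f$ uniformly on every compact set, since $f$ is uniformly continuous there. Hence, by the triangle inequality, it suffices to approximate a \emph{smooth} increasing $f$, for which $f' \ge 0$ everywhere.

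Next, fix a compact interval $K=[-M,M]$ and a tolerance $\varepsilon>0$, and pick an auxiliary $\eta>0$. The function $\sqrt{f'+\eta}$ is continuous and strictly positive on $K$, so by Weierstrass there is a polynomial $s$ with $\sup_{K}\big|s-\sqrt{f'+\eta}\big|$ as small as desired. The crucial point is that $s^2$ is a perfect square, hence nonnegative on all of $\RR$, while on $K$,
\[
\big| s^2 - f' \big| \le \big|s-\sqrt{f'+\eta}\big|\,\big|s+\sqrt{f'+\eta}\big| + \eta,
\]
which can be forced below $\varepsilon/M$ by taking $\eta$ and the Weierstrass error small. Setting $P(z) := f(0) + \int_0^z s(u)^2\,\rmd u$, \Cref{thm:sos} certifies that $P$ is an increasing polynomial (its derivative $s^2$ is a sum of squares), and since $f(z)=f(0)+\int_0^z f'$, for every $z\in K$,
\[
\big| P(z) - f(z) \big| = \Big| \int_0^z \big(s^2-f'\big)\,\rmd u \Big| \le M \sup_{K}\big|s^2-f'\big| < \varepsilon.
\]
As the topology of compact convergence is generated by the seminorms $\sup_{K}$ over compact $K$, and any such $K$ lies in some $[-M,M]$, this establishes density.

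The main obstacle is the \emph{global} nonnegativity requirement: Weierstrass only controls the error on $K$, so a direct polynomial approximation of $f'$ could dip below zero elsewhere, destroying monotonicity outside the interval. The square-root-then-square device is exactly what converts an on-$K$ approximation into a genuinely globally nonnegative---indeed sum-of-squares---derivative, which is what \Cref{thm:sos} requires. The remaining ingredients (mollification preserving monotonicity and the elementary integral bound) are routine.
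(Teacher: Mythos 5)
Your proof is correct, and it takes a genuinely different route from the paper's. The paper argues abstractly: it invokes a lemma of \citet{MulanskyNeamtu98} --- if $S$ is a dense subspace of $X$ and $C\subseteq X$ is convex with $\intr(C)\neq\emptyset$, then $C\cap S$ is dense in $C$ --- applied with $X=\Cc(K)$, $S$ the polynomials (dense by Weierstrass), and $C$ the cone of increasing continuous functions on $K$, whose interior is claimed to be nonempty (``take say a linear function with positive slope''). You instead work at the level of derivatives: mollify so that $f$ may be taken smooth and increasing, approximate $\sqrt{f'+\eta}$ uniformly on $K$ by a polynomial $s$, and integrate $s^2$. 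The comparison is instructive, and your route buys two real things. First, the interior hypothesis is precisely the delicate point of the paper's argument: in the sup norm on $\Cc(K)$ the increasing cone has \emph{empty} interior --- for any increasing $f$ and any $\epsilon>0$, the perturbation $f(x)+\tfrac{\epsilon}{2}\sin(Nx)$ stays within $\epsilon$ of $f$ yet is not increasing once $N$ is so large that $f$ varies by less than $\epsilon$ over intervals of length $\pi/N$ --- so no linear function is an interior point, and the lemma cannot be applied verbatim. Your argument never needs such perturbation-stability of monotonicity: the approximant is increasing \emph{structurally}, because its derivative is a perfect square; in effect, your mollification step is the passage to a $C^1$-type setting where monotonicity \emph{is} stable, which is what a corrected abstract argument would also require. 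Second, your construction yields a polynomial that is increasing on all of $\RR$ (its derivative is globally nonnegative), hence genuinely of the form \eqref{eq:sos} as demanded by \Cref{thm:sos}, whereas an argument conducted entirely inside $\Cc(K)$ only directly produces polynomials increasing on $K$. The cost is modest: a longer, hands-on proof, compared with the abstract lemma which, in settings where its interiority hypothesis does hold, dispatches such density claims in one line.
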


Since the topology of pointwise convergence is weaker than that of compact convergence (\ie uniform convergence on every compact set), we immediately know that there exists a sequence of increasing polynomials of the form \eqref{eq:sos} that converges pointwise to any given continuous function. This universal property of increasing polynomials allows us to prove the universality of SOS flows, \ie the capability of approximating any (continuous) triangular map. 

SOS flow consists of two parts: an increasing (univariate) polynomial $\Pf_{2r+1}(z_j; \av_j)$ of the form \eqref{eq:sos} for modelling conditional densities and a conditioner network $C_j(z_1, \ldots, z_{j-1})$ for generating the coefficients $\av_j$ of the polynomial $\Pf_{2r+1}(z_j; \av_j)$.  In other words, the triangular map learned using SOS flows has the following form:
\begin{align}
\label{eq:SOS_T}
\forall j, ~~ T_j(z_1, \ldots, z_j) = \Pf_{2r+1}\big(z_j; C_j(z_1, \ldots, z_{j-1})\big).
\end{align}
If we choose a universal conditioner (that can approximate any continuous function), such as a neural net, then combining with \Cref{thm:sos} and \Cref{thm:univ} we verify that the triangular maps in the form of \eqref{eq:SOS_T} can approximate any increasing continuous triangular map in the pointwise manner. It then follows that the transformed densities will converge weakly to any desired target density (\ie in distribution). This solves the first issue mentioned before \Cref{thm:sos}.
We remark that our universality proof for SOS flows is significantly shorter and more streamlined than the previous attempt of \citet{HuangKLC18}, and it can be seemingly extended to analyze other models summarized in \Cref{tab: flows}.  

As pointed out by \citet{PapamakariosPM17} we can also construct conditioner networks $C_j$ that take inputs $x_1, \ldots, x_{j-1}$, instead of $z_1, \ldots, z_{j-1}$. They are equivalent in theory but one can be more convenient than the other, depending on the downstream application. \Cref{fig:SOS_single} illustrates the main components of a single-block SOS flow, where we implement the conditioner network in the same way as in \citep{PapamakariosPM17}. To get a higher degree approximation, we can either increase $r$ or stack a few single-block SOS flows, as shown in \Cref{fig:SOS_stack}. The former approach appears to be more general but also more difficult to train due to the larger number of parameters. Indeed, the effective number of parameters for SOS flows obtained by stacking $L$ blocks with $k$ polynomials of degree $2r+1$ is $L\cdot k \cdot (r+1)$ whereas achieving the same representation with a single block wide SOS flow would require $ 1/2 \cdot k \cdot ((2r+1)^L - 1)$ parameters. In \Cref{subsec : sim exp} we perform simulated experiments to compare deep \vs wide SOS flows.
\begin{figure}[t]
    \centering
    \resizebox{8cm}{!}{
    \begin{tikzpicture}
\node[draw, fill=yellow!20, font=\large, inner xsep=2em] at (0,0) (conditioner){Conditioner Network};
\node[draw, fill=purple!20 ] at ($(conditioner) - (2,1.5)$) (t1) {$\av_1$};
\node[draw, fill=purple!20 ] at ($(conditioner) - (1,1.5)$) (t2) {$\av_2$};
\node[draw, fill=purple!20 ] at ($(conditioner) - (0,1.5)$) (t3) {$\av_3$};
\node[draw, fill=purple!20 ] at ($(conditioner) - (-2,1.5)$) (tk) {$\av_k$};
\draw[-, dashed,  line width = 0.2mm] (t3.east) -- (tk.west);
\draw[->, line width=0.15mm, -latex] (conditioner.south)--(t1);
\draw[->, line width=0.15mm, -latex] (conditioner.south)--(t2);
\draw[->, line width=0.15mm, -latex] (conditioner.south)--(t3);
\draw[->, line width=0.15mm, -latex] (conditioner.south)--(tk);
\node[draw, fill=red!50, font=\large, inner xsep=5em] at ($(conditioner) - (0,3)$) (sos) {SOS Transformation};
\draw[->, line width=0.15mm, -latex] (t1)--(sos);
\draw[->, line width=0.15mm, -latex] (t2)--(sos);
\draw[->, line width=0.15mm, -latex] (t3)--(sos);
\draw[->, line width=0.15mm, -latex] (tk)--(sos);
\node[draw, fit=(conditioner)(t1)(t2)(t3)(tk)(sos), inner sep =1em](box){};
\node at (box.north)[above] {\textbf{Sum-of-Squares Polynomial Flows}};
\node[draw, fill=white, fill=orange!10, font = \large]  at ($(conditioner) - (4.5,0)$) (z) {$\mathbf{z}^0$};
\draw[->, dashed, line width=0.2mm, -latex](z)--(conditioner.west);
\node[draw, fill=white, fill=orange!10, font = \large] at ($(sos) - (-4.5,0)$) (x) {$\mathbf{z}^1$};
\draw[->, dashed, line width=0.2mm, -latex](sos.east)--(x);
\node[draw, inner ysep=5em, align=left] at ($(box) - (-6, 0)$) (box2){SOS \\ Flows};
\draw[->, dashed, line width=0.2mm, -latex](x)-- (x -| box2.west);
\node[draw=none] at ($(box) - (-7.5, 0)$){$\cdots\cdots$};
\node[draw, inner ysep=5em, align=left] at ($(box) - (-9, 0)$) (box3){SOS \\ Flows};
\node[draw, fill=blue!10, font=\large] at ($(sos) - (-10.5, 0)$) (zf){$\mathbf{x}$};
\draw[-, dashed, line width =0.15mm, -latex](zf -| box3.east) -- (zf);
\end{tikzpicture}
    }
    \caption{Schematic of SOS flows by stacking multiple blocks of SOS transformation.}
    \label{fig:SOS_stack}
\end{figure}
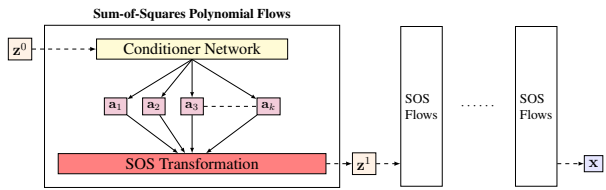

SOS flow is similar to the neural autoregressive flow (NAF) of \citet{HuangKLC18} in the sense that both are capable of approximating any (continuous) triangular map hence learning any desired target density. However, SOS flow has the following advantages: 
\begin{itemize}[itemsep=1pt,topsep=1pt]
    \item As mentioned before, SOS flow is a strict generalization of the inverse autoregressive flow (IAF) of \citet{KingmaSJCSW16}, which corresponds to setting $r=0$. 
    \item SOS flow is more interpretable, in the sense that its parameters (\ie coefficients of the polynomials) directly control the first few moments of the target density.
    \item SOS flow may be easier to train, as there is no constraint on its parameters $\av$. In contrast, NAF needs to make sure the parameters are nonnegative\footnote{A typical remedy is to re-parameterize through an exponential transform, which, however,  may lead to overflows or underflows.}.
\end{itemize}
\section{Experiments}
\label{sec:exp}

\begin{figure*}[t]
    \centering
    \includegraphics[height=4cm, keepaspectratio]{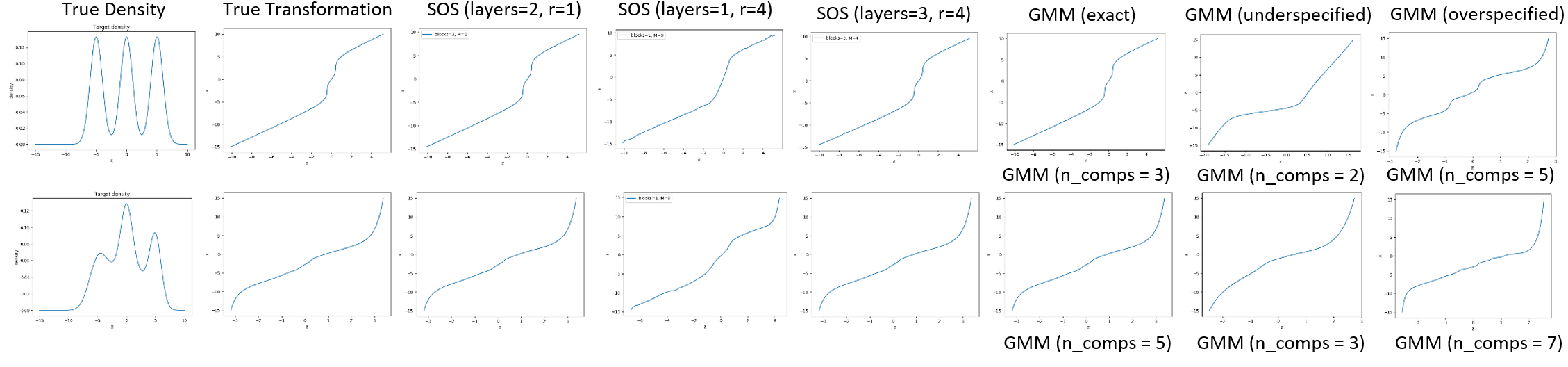}
    \vspace{-3ex}
    
    \caption{\textbf{Top Row:} First plot from the left shows the target density, a mixture of three component Gaussians with means = (-5, 0, 5), variances = (1, 1, 1) and, weights = (1/3, 1/3, 1/3). The second plot shows the exact transformation required to transform a standard Gaussian to this mixture. The next three plots shows the transformation learned by SOS flows with different configurations (deep, wide and wide-deep, respectively). The last three plots show the transformation learned by estimating the parameters of the Gaussian mixture using log-likelihood with exact (3), under-specified (2) and over-specified (5) number of components respectively.  \textbf{Bottom Row:} Same as Top Row but with target density being a mixture of five Gaussians with means = (-5, -2, 0, 2, 5), variances = (1.5, 2, 1, 2, 1) and, weights = 0.2 each.}
    \vspace{-2ex}
    \label{fig:gaussian conditionals}
\end{figure*}

We evaluated the performance of SOS flows on both synthetic and real-world datasets for density estimation, and compare it to several alternative autoregressive models and flow based methods.  
\ifrev
Code is available at \url{github.com/anonymouspapersub/sosflows}.
\red{(stability of higher degrees, CIFAR-10, MNIST, Wenliang's synthetic exp)}.
\fi 

\begin{table*}[t]
\begin{center}
\caption{Average test log-likelihoods and standard deviation for SOS flows over 10 trials (higher is better). The other methods report the average log-likelihood and standard deviation over five trials. The numbers in the parenthesis indicate the number of stacked blocks for the resultant transformation.}
\label{table : real-datasets}
\vspace{1ex}
\begin{tabular}{c|c|c|c|c|c} 
\toprule
Method & Power & Gas & Hepmass& MiniBoone & BSDS300\\
\midrule
 MADE  &0.40 $\pm$ 0.01    &8.47 $\pm$ 0.02   &-15.15 $\pm$ 0.02 & -12.24 $\pm$ 0.47 & 153.71 $\pm$ 0.28 \\
 MAF affine (5) & 0.14 $\pm$ 0.01  & 9.07 $\pm$ 0.02   &-17.70 $\pm$ 0.02 & -11.75 $\pm$ 0.44&155.69 $\pm$ 0.28\\
 MAF affine (10) &0.24 $\pm$ 0.01 & 10.08 $\pm$ 0.02&  -17.73 $\pm$ 0.02 & -12.24 $\pm$ 0.45 & 154.93 $\pm$ 0.28\\
 MAF MoG (5)  &0.30 $\pm$ 0.01   & 9.59 $\pm$ 0.02 & -17.39 $\pm$ 0.02 & -11.68 $\pm$ 0.44 & 156.36 $\pm$ 0.28\\
 TAN & 0.60 $\pm$ 0.01  & 12.06 $\pm$ 0.02 &-13.78 $\pm$ 0.02 & -11.01 $\pm$ 0.48 & 159.80 $\pm$ 0.07\\
 NAF DDSF (5)  &0.62 $\pm$ 0.01 & 11.91 $\pm$ 0.13 & -15.09 $\pm$ 0.40 & -8.86 $\pm$ 0.15 & 157.73 $\pm$ 0.04\\
 NAF DDSF (10)&0.60 $\pm$ 0.02  & 11.96 $\pm$ 0.33 & -15.32 $\pm$ 0.23 &  -9.01 $\pm$ 0.01 & 157.43 $\pm$ 0.30\\
 SOS  (7) & 0.60 $\pm$ 0.01 & 11.99 $\pm$ 0.41 & -15.15 $\pm$ 0.10 & -8.90 $\pm$ 0.11& 157.48 $\pm$ 0.41\\
 \bottomrule
\end{tabular}
\end{center}
\end{table*}

\begin{table}[t]
\begin{center}
\caption{Negative test log-likelihoods for various density estimation models on image datasets (lower is better). * results/models used multi-scale convolutional architectures.}
\label{table:img}
\vspace{1ex}
\begin{tabular}{c|c|c} 
\toprule
Method & MNIST & CIFAR10\\
\midrule
 Real-NVP & 1.06* & 3.49*\\
 Glow  & 1.05* & 3.35* \\
 FFJORD & 0.99* & 3.40*\\
  MADE  & 2.04 & 5.67\\
 MAF  & 1.89  & 4.31\\
 SOS   & 1.81 & 4.18 \\
 \bottomrule
\end{tabular}
\end{center}
\vspace{-4ex}
\end{table}

\subsection{Simulated Experiments}
\label{subsec : sim exp}
We performed a host of experiments on simulated data to gain in-depth understanding of SOS flows. 

In \Cref{fig:gaussian conditionals} we demonstrate the ability of SOS flows to represent transformations that lead to multi-modal densities by generating data from a mixture of Gaussians for two cases - well-connected and disjoint support. The true transformation can be computed exactly following \Cref{exm:univ}. We show three transformations learned by SOS flows for each case corresponding to a deep SOS flow, wide SOS flow and wide-deep SOS flow. As is evident, SOS flows were fairly successful in learning the transformations. We further estimated the parameters of these simulated densities using Gaussian mixtures trained using maximum likelihood under three cases - exact (same number of components as target density), under-specified (lesser number of components) and over-specified. Subsequently, we plot the resulting transformation in each case following \Cref{exm:univ}. While, GMMs with exact components work well as expected, the transformations learned by under-specified and over-specified models are not as good. This experiment also goes on to show that using a parameterized density to model conditionals is equivalent to implicitly learning a transformation.  
We also performed experiments to study the effect of relative ordering of variables for the conditioner network and the representational power of deep and wide SOS flows. Finally, we tested SOS flows on a suite of 2D simulated datasets -- Funnel, Banana, Square, Mixture of Gaussians and Mixture of Rings. Due to space constraints we defer the figures and explanations to \Cref{app: simulated exp}.

\subsection{Real-World Datasets}
\label{sec : real data}
We also performed density estimation experiments on 5 real world datasets that include four datasets from the UCI repository and BSDS300. These datasets have been previously considered for comparison of flows based methods \cite{HuangKLC18}.

The SOS transformation was trained using maximum likelihood method with source density as standard normal distribution. We used stochastic gradient descent to train our models with a batch size of 1000, learning rate = 0.001, number of stacked blocks = 8, number of polynomials ($k$) = 5 and, degree of polynomials ($r$) = 4 with number of epochs for training = 40. We compare our method to previous works on normalizing flows and autoregressive models which include MADE-MoG \cite{GermainGML15}, MAF \cite{PapamakariosPM17}, MAF-MoG \cite{PapamakariosPM17}, TAN \cite{OlivaDZPSXS18} and NAFs \cite{HuangKLC18}. In \Cref{table : real-datasets}, we report the average log-likelihood obtained using 10 fold cross-validation on held-out test sets for SOS flows. The performance reported for other methods are those reported in \citep{HuangKLC18}. The results show that SOS flows are able to achieve competitive performance as compared to other methods.

\section{Conclusion}
\label{sec:conclusion}
We presented a unified framework for estimating complex densities using monotone and bijective triangular maps. 
The main idea is to specify one-dimensional transformations and then iteratively extend to higher-dimensions using conditioner networks. Under this framework, we analyzed popular autoregressive and flow based methods, revealed their similarities and differences, and provided a unified and streamlined approach for understanding the representation power of these methods. Along the way we uncovered a new sum-of-squares polynomial flow that we show is universal, interpretable and easy to train. We discussed the various advantages of SOS flows for stochastic simulation and density estimation, and we performed various experiments on simulated data to explore the properties of SOS flows. Lastly, SOS flows achieved competitive results on real-world datasets. In the future we plan to carry out the analysis indicated in \Cref{tab: flows}, and to formally establish the respective advantages between deep and wide SOS flows.

\section*{Acknowledgement}
\ifrev We would like to thank \red{original Git repo} for the code.
\else 
We thank the reviewers for their insightful comments, and Csaba Szepesv\'ari for bringing \citep{MulanskyNeamtu98} to our attention, which allowed us to reduce a lengthy proof of \Cref{thm:univ} to the current slim one. We would also like to thank Ilya Kostrikov for the code which we adapted for our SOS Flow implementation. We thank Junier Oliva for pointing out an oversight about TAN in a previous draft and Youssef Marzouk for bringing additional references to our attention. Finally, we gratefully acknowledge support from NSERC. PJ was also supported by the Cheriton Scholarship, Borealis AI Fellowship and Huawei Graduate Scholarship.
\fi
\bibliographystyle{icml2019}
\bibliography{dgm}

\begin{thebibliography}{52}
\providecommand{\natexlab}[1]{#1}
\providecommand{\url}[1]{\texttt{#1}}
\expandafter\ifx\csname urlstyle\endcsname\relax
  \providecommand{\doi}[1]{doi: #1}\else
  \providecommand{\doi}{doi: \begingroup \urlstyle{rm}\Url}\fi

\bibitem[Alexandrova(2006)]{Alexandrova06}
Alexandrova, D.
\newblock \href{https://doi.org/10.1137/S0040585X97981512}{Convergence of
  Triangular Transformations of Measures}.
\newblock \emph{Theory of Probability \& Its Applications}, 50\penalty0
  (1):\penalty0 113--118, 2006.

\bibitem[Ball{\' e} et~al.(2016)Ball{\' e}, Laparra, and Simoncelli]{BalleLS16}
Ball{\' e}, J., Laparra, V., and Simoncelli, E.~P.
\newblock \href{https://arxiv.org/abs/1511.06281.pdf}{Density modeling of
  images using a generalized normalization transformation}.
\newblock In \emph{ICLR}, 2016.

\bibitem[Bengio \& Bengio(1999)Bengio and Bengio]{BengioBengio99}
Bengio, Y. and Bengio, S.
\newblock
  \href{https://papers.nips.cc/paper/1679-modeling-high-dimensional-discrete-data-with-multi-layer-neural-networks}{Modeling
  High-Dimensional Discrete Data with Multi-Layer Neural Networks}.
\newblock In \emph{NeurIPS}, 1999.

\bibitem[Berg et~al.(2018)Berg, Hasenclever, Tomczak, and Welling]{BergHTW18}
Berg, R. v.~d., Hasenclever, L., Tomczak, J.~M., and Welling, M.
\newblock \href{http://auai.org/uai2018/proceedings/papers/156.pdf}{Sylvester
  normalizing flows for variational inference}.
\newblock In \emph{UAI}, 2018.

\bibitem[Bogachev et~al.(2005)Bogachev, Kolesnikov, and Medvedev]{BogachevKM05}
Bogachev, V.~I., Kolesnikov, A.~V., and Medvedev, K.~V.
\newblock
  \href{http://iopscience.iop.org/article/10.1070/SM2005v196n03ABEH000882/pdf}{Triangular
  transformations of measures}.
\newblock \emph{Sbornik: Mathematics}, 196\penalty0 (3):\penalty0 309--335,
  2005.

\bibitem[Carlier et~al.(2010)Carlier, Galichon, and Santambrogio]{CarlierGS10}
Carlier, G., Galichon, A., and Santambrogio, F.
\newblock \href{https://doi.org/10.1137/080740647}{From {K}nothe's Transport to
  {B}renier's Map and a Continuation Method for Optimal Transport}.
\newblock \emph{SIAM Journal on Mathematical Analysis}, 41\penalty0
  (6):\penalty0 2554--2576, 2010.

\bibitem[Chen \& Gopinath(2001)Chen and Gopinath]{ChenGopinath01}
Chen, S.~S. and Gopinath, R.~A.
\newblock
  \href{https://papers.nips.cc/paper/1856-gaussianization}{Gaussianization}.
\newblock In \emph{NeurIPS}, pp.\  423--429, 2001.

\bibitem[Chen et~al.(2018)Chen, Rubanova, Bettencourt, and Duvenaud]{ChenRBD18}
Chen, T.~Q., Rubanova, Y., Bettencourt, J., and Duvenaud, D.~K.
\newblock
  \href{http://papers.nips.cc/paper/7892-neural-ordinary-differential-equations}{Neural
  ordinary differential equations}.
\newblock In \emph{NeurIPS}, pp.\  6572--6583, 2018.

\bibitem[Deco \& Brauer(1995)Deco and Brauer]{DecoBrauer95}
Deco, G. and Brauer, W.
\newblock
  \href{http://www.sciencedirect.com/science/article/pii/089360809400108X}{Nonlinear
  higher-order statistical decorrelation by volume-conserving neural
  architectures}.
\newblock \emph{Neural Networks}, 8\penalty0 (4):\penalty0 525--535, 1995.

\bibitem[Dinh et~al.(2015)Dinh, Krueger, and Bengio]{DinhKB15}
Dinh, L., Krueger, D., and Bengio, Y.
\newblock \href{https://arxiv.org/pdf/1410.8516.pdf}{{NICE}: Non-linear
  independent components estimation}.
\newblock In \emph{ICLR workshop}, 2015.

\bibitem[Dinh et~al.(2017)Dinh, Sohl-Dickstein, and Bengio]{DinhSDB17}
Dinh, L., Sohl-Dickstein, J., and Bengio, S.
\newblock \href{https://openreview.net/forum?id=HkpbnH9lx}{Density estimation
  using Real {NVP}}.
\newblock In \emph{ICLR}, 2017.

\bibitem[Fleishman(1978)]{Fleishman78}
Fleishman, A.~I.
\newblock \href{https://doi.org/10.1007/BF02293811}{A method for simulating
  non-normal distributions}.
\newblock \emph{Psychometrika}, 43\penalty0 (4):\penalty0 521--532, 1978.

\bibitem[Friedman et~al.(1984)Friedman, Stuetzle, and Schroeder]{FriedmanSS84}
Friedman, J.~H., Stuetzle, W., and Schroeder, A.
\newblock
  \href{https://www.tandfonline.com/doi/pdf/10.1080/01621459.1984.10478086}{Projection
  Pursuit Density Estimation}.
\newblock \emph{Journal of the American Statistical Association}, 79\penalty0
  (387):\penalty0 599--608, 1984.

\bibitem[Germain et~al.(2015)Germain, Gregor, Murray, and
  Larochelle]{GermainGML15}
Germain, M., Gregor, K., Murray, I., and Larochelle, H.
\newblock \href{http://proceedings.mlr.press/v37/germain15.pdf}{{MADE}: Masked
  autoencoder for distribution estimation}.
\newblock In \emph{ICML}, pp.\  881--889, 2015.

\bibitem[Goodfellow et~al.(2014)Goodfellow, Pouget-Abadie, Mirza, Xu,
  Warde-Farley, Ozair, Courville, and Bengio]{GoodfellowPMXFOCB14}
Goodfellow, I., Pouget-Abadie, J., Mirza, M., Xu, B., Warde-Farley, D., Ozair,
  S., Courville, A., and Bengio, Y.
\newblock
  \href{https://papers.nips.cc/paper/5423-generative-adversarial-nets}{Generative
  adversarial nets}.
\newblock In \emph{NeurIPS}, pp.\  2672--2680, 2014.

\bibitem[Grathwohl et~al.(2019)Grathwohl, Chen, Betterncourt, Sutskever, and
  Duvenaud]{GrathwohlCBSD18}
Grathwohl, W., Chen, R. T.~Q., Betterncourt, J., Sutskever, I., and Duvenaud,
  D.
\newblock \href{https://openreview.net/forum?id=rJxgknCcK7}{Ffjord: Free-form
  continuous dynamics for scalable reversible generative models}.
\newblock In \emph{ICLR}, 2019.

\bibitem[Headrick(2009)]{Headrick09}
Headrick, T.~C.
\newblock
  \emph{\href{https://www.crcpress.com/Statistical-Simulation-Power-Method-Polynomials-and-Other-Transformations/Headrick/p/book/9781138116283}{Statistical
  Simulation Power Method Polynomials and Other Transformations}}.
\newblock {CRC} Press, 2009.

\bibitem[Huang et~al.(2018)Huang, Krueger, Lacoste, and Courville]{HuangKLC18}
Huang, C.-W., Krueger, D., Lacoste, A., and Courville, A.
\newblock \href{http://proceedings.mlr.press/v80/huang18d/huang18d.pdf}{Neural
  Autoregressive Flows}.
\newblock In \emph{ICML}, 2018.

\bibitem[Huber(1985)]{Huber85}
Huber, P.~J.
\newblock \href{https://doi.org/10.1214/aos/1176349519}{Projection Pursuit}.
\newblock \emph{The Annals of Statistics}, 13\penalty0 (2):\penalty0 435--475,
  1985.

\bibitem[Kingma \& Dhariwal(2018)Kingma and Dhariwal]{KingmaDhariwal18}
Kingma, D.~P. and Dhariwal, P.
\newblock
  \href{https://papers.nips.cc/paper/8224-glow-generative-flow-with-invertible-1x1-convolutions}{Glow:
  Generative flow with invertible 1x1 convolutions}.
\newblock In \emph{NeurIPS}, 2018.

\bibitem[Kingma \& Welling(2014)Kingma and Welling]{KingmaWelling13}
Kingma, D.~P. and Welling, M.
\newblock \href{https://openreview.net/forum?id=33X9fd2-9FyZd}{Auto-encoding
  variational {B}ayes}.
\newblock In \emph{ICLR}, 2014.

\bibitem[Kingma et~al.(2016)Kingma, Salimans, Jozefowicz, Chen, Sutskever, and
  Welling]{KingmaSJCSW16}
Kingma, D.~P., Salimans, T., Jozefowicz, R., Chen, X., Sutskever, I., and
  Welling, M.
\newblock
  \href{https://papers.nips.cc/paper/6581-improved-variational-inference-with-inverse-autoregressive-flow.html}{Improved
  variational inference with inverse autoregressive flow}.
\newblock In \emph{NeurIPS}, pp.\  4743--4751, 2016.

\bibitem[Knothe(1957)]{Knothe57}
Knothe, H.
\newblock \href{https://doi.org/10.1307/mmj/1028990175}{Contributions to the
  theory of convex bodies}.
\newblock \emph{The Michigan Mathematical Journal}, 4\penalty0 (1):\penalty0
  39--52, 1957.

\bibitem[Laparra et~al.(2011)Laparra, Camps-Valls, and Malo]{LaparraCVM11}
Laparra, V., Camps-Valls, G., and Malo, J.
\newblock \href{https://ieeexplore.ieee.org/document/5720319}{Iterative
  Gaussianization: From ICA to Random Rotations}.
\newblock \emph{{IEEE} Transactions on Neural Networks}, 22\penalty0
  (4):\penalty0 537--549, 2011.

\bibitem[Larochelle \& Murray(2011)Larochelle and Murray]{LarochelleMurray11}
Larochelle, H. and Murray, I.
\newblock
  \href{http://proceedings.mlr.press/v15/larochelle11a/larochelle11a.pdf}{The
  neural autoregressive distribution estimator}.
\newblock In \emph{AISTATS}, pp.\  29--37, 2011.

\bibitem[MacKay et~al.(2018)MacKay, Vicol, Ba, and Grosse]{MacKayVBG18}
MacKay, M., Vicol, P., Ba, J., and Grosse, R.~B.
\newblock
  \href{http://papers.nips.cc/paper/8117-reversible-recurrent-neural-networks.html}{Reversible
  Recurrent Neural Networks}.
\newblock In \emph{NeurIPS}, pp.\  9043--9054, 2018.

\bibitem[Marshall(2008)]{Marshall08}
Marshall, M.
\newblock \emph{\href{https://bookstore.ams.org/surv-146}{Positive Polynomials
  and Sums of Squares}}.
\newblock {AMS}, 2008.

\bibitem[Marzouk et~al.(2016)Marzouk, Moselhy, Parno, and
  Spantini]{MarzoukMPS16}
Marzouk, Y., Moselhy, T., Parno, M., and Spantini, A.
\newblock \href{https://doi.org/10.1007/978-3-319-11259-6_23-1}{Sampling via
  Measure Transport: An Introduction}.
\newblock In Ghanem, R., Higdon, D., and Owhadi, H. (eds.), \emph{Handbook of
  Uncertainty Quantification}, pp.\  1--41. Springer, 2016.

\bibitem[McLachlan \& Peel(2004)McLachlan and Peel]{McLachlanPeel04}
McLachlan, G. and Peel, D.
\newblock
  \emph{\href{https://onlinelibrary.wiley.com/doi/book/10.1002/0471721182}{Finite
  mixture models}}.
\newblock John Wiley \& Sons, 2004.

\bibitem[Medvedev(2008)]{Medvedev08}
Medvedev, K.~V.
\newblock \href{http://mi.mathnet.ru/eng/thsp133}{Certain properties of
  triangular transformations of measures}.
\newblock \emph{Theory of Stochastic Processes}, 14\penalty0 (1):\penalty0
  95--99, 2008.

\bibitem[Moselhy \& Marzouk(2012)Moselhy and Marzouk]{MoselhyMarzouk12}
Moselhy, T. A.~E. and Marzouk, Y.~M.
\newblock \href{https://doi.org/10.1016/j.jcp.2012.07.022}{Bayesian inference
  with optimal maps}.
\newblock \emph{Journal of Computational Physics}, 231\penalty0 (23):\penalty0
  7815--7850, 2012.

\bibitem[Mulansky \& Neamtu(1998)Mulansky and Neamtu]{MulanskyNeamtu98}
Mulansky, B. and Neamtu, M.
\newblock
  \href{http://www.sciencedirect.com/science/article/pii/S002190459693107X}{Interpolation
  and Approximation from Convex Sets}.
\newblock \emph{Journal of Approximation Theory}, 92\penalty0 (1):\penalty0
  82--100, 1998.

\bibitem[Neal(1992)]{Neal92}
Neal, R.~M.
\newblock \href{https://doi.org/10.1016/0004-3702(92)90065-6}{Connectionist
  learning of belief networks}.
\newblock \emph{Artificial Intelligence}, 56\penalty0 (1):\penalty0 71--113,
  1992.

\bibitem[Oliva et~al.(2018)Oliva, Dubey, Zaheer, Poczos, Salakhutdinov, Xing,
  and Schneider]{OlivaDZPSXS18}
Oliva, J., Dubey, A., Zaheer, M., Poczos, B., Salakhutdinov, R., Xing, E., and
  Schneider, J.
\newblock \href{http://proceedings.mlr.press/v80/oliva18a.html}{Transformation
  Autoregressive Networks}.
\newblock In \emph{ICML}, pp.\  3898--3907, 2018.

\bibitem[Oord et~al.(2016)Oord, Kalchbrenner, and Kavukcuoglu]{OOrdKK16}
Oord, A.~V., Kalchbrenner, N., and Kavukcuoglu, K.
\newblock \href{http://proceedings.mlr.press/v48/oord16.html}{Pixel Recurrent
  Neural Networks}.
\newblock In \emph{ICML}, pp.\  1747--1756, 2016.

\bibitem[Ostrovski et~al.(2018)Ostrovski, Dabney, and Munos]{OstrovskiDM18}
Ostrovski, G., Dabney, W., and Munos, R.
\newblock
  \href{http://proceedings.mlr.press/v80/ostrovski18a.html}{Autoregressive
  Quantile Networks for Generative Modeling}.
\newblock In \emph{ICML}, pp.\  3936--3945, 2018.

\bibitem[Papamakarios et~al.(2017)Papamakarios, Pavlakou, and
  Murray]{PapamakariosPM17}
Papamakarios, G., Pavlakou, T., and Murray, I.
\newblock
  \href{https://papers.nips.cc/paper/6828-masked-autoregressive-flow-for-density-estimation}{Masked
  autoregressive flow for density estimation}.
\newblock In \emph{NeurIPS}, pp.\  2338--2347, 2017.

\bibitem[Parno \& Marzouk(2018)Parno and Marzouk]{ParnoMarzouk18}
Parno, M. and Marzouk, Y.
\newblock \href{https://doi.org/10.1137/17M1134640}{Transport Map Accelerated
  Markov Chain Monte Carlo}.
\newblock \emph{{SIAM/ASA} Journal on Uncertainty Quantification}, 6\penalty0
  (2):\penalty0 645--682, 2018.

\bibitem[Peherstorfer \& Marzouk(2018)Peherstorfer and
  Marzouk]{PeherstorferMarzouk18}
Peherstorfer, B. and Marzouk, Y.
\newblock \href{https://arxiv.org/abs/1808.09379}{A transport-based
  multifidelity preconditioner for Markov chain Monte Carlo}, 2018.

\bibitem[Redlich(1993)]{Redlich93}
Redlich, A.~N.
\newblock \href{https://doi.org/10.1162/neco.1993.5.5.750}{Supervised Factorial
  Learning}.
\newblock \emph{Neural Computation}, 5\penalty0 (5):\penalty0 750--766, 1993.

\bibitem[Rezende \& Mohamed(2015)Rezende and Mohamed]{RezendeMohamed15}
Rezende, D.~J. and Mohamed, S.
\newblock \href{http://proceedings.mlr.press/v37/rezende15.html}{Variational
  inference with normalizing flows}.
\newblock In \emph{ICML}, 2015.

\bibitem[Rezende et~al.(2014)Rezende, Mohamed, and Wierstra]{RezendeMW14}
Rezende, D.~J., Mohamed, S., and Wierstra, D.
\newblock \href{http://proceedings.mlr.press/v32/rezende14.html}{Stochastic
  backpropagation and approximate inference in deep generative models}.
\newblock In \emph{ICML}, 2014.

\bibitem[Rosenblatt(1952)]{Rosenblatt52}
Rosenblatt, M.
\newblock \href{https://www.jstor.org/stable/2236692}{Remarks on a Multivariate
  Transformation}.
\newblock \emph{The Annals of Mathematical Statistics}, 23\penalty0
  (3):\penalty0 470--472, 1952.

\bibitem[Rudin(1987)]{Rudin87}
Rudin, W.
\newblock
  \emph{\href{https://www.mheducation.com/highered/product/real-complex-analysis-rudin/M9780070542341.html}{Real
  and Complex Analysis}}.
\newblock McGraw-Hill, 3rd edition, 1987.

\bibitem[Spantini et~al.(2018)Spantini, Bigoni, and Marzouk]{SpantiniBM18}
Spantini, A., Bigoni, D., and Marzouk, Y.
\newblock \href{http://jmlr.org/papers/v19/17-747.html}{Inference via
  low-dimensional couplings}.
\newblock \emph{Journal of Machine Learning Research}, 19:\penalty0 1--71,
  2018.

\bibitem[Tabak \& Turner(2013)Tabak and Turner]{TabakTurner13}
Tabak, E.~G. and Turner, C.~V.
\newblock \href{https://onlinelibrary.wiley.com/doi/abs/10.1002/cpa.21423}{A
  family of nonparametric density estimation algorithms}.
\newblock \emph{Communications on Pure and Applied Mathematics}, 66\penalty0
  (2):\penalty0 145--164, 2013.

\bibitem[Tabak \& Vanden-Eijnden(2010)Tabak and Vanden-Eijnden]{TabakVE10}
Tabak, E.~G. and Vanden-Eijnden, E.
\newblock \href{https://projecteuclid.org/euclid.cms/1266935020}{Density
  estimation by dual ascent of the log-likelihood}.
\newblock \emph{Communications in Mathematical Sciences}, 8\penalty0
  (1):\penalty0 217--233, 2010.

\bibitem[Talagrand(1996)]{Talagrand96}
Talagrand, M.
\newblock \href{https://doi.org/10.1007/BF02249265}{Transportation cost for
  Gaussian and other product measures}.
\newblock \emph{Geometric {\&} Functional Analysis}, 6\penalty0 (3):\penalty0
  587--600, 1996.

\bibitem[Uria et~al.(2016)Uria, C{\^o}t{\'e}, Gregor, Murray, and
  Larochelle]{UriaCGML16}
Uria, B., C{\^o}t{\'e}, M.-A., Gregor, K., Murray, I., and Larochelle, H.
\newblock \href{http://www.jmlr.org/papers/volume17/16-272/16-272.pdf}{Neural
  autoregressive distribution estimation}.
\newblock \emph{The Journal of Machine Learning Research}, 17:\penalty0
  7184--7220, 2016.

\bibitem[Vale \& Maurelli(1983)Vale and Maurelli]{ValeMaurelli83}
Vale, C.~D. and Maurelli, V.~A.
\newblock
  \href{https://link.springer.com/article/10.1007/BF02293687}{Simulating
  multivariate nonnormal distributions}.
\newblock \emph{Psychometrika}, 48\penalty0 (3):\penalty0 465--471, 1983.

\bibitem[Villani(2008)]{Villani08}
Villani, C.
\newblock \emph{\href{https://www.springer.com/gp/book/9783540710493}{Optimal
  Transport: Old and New}}.
\newblock Springer, 2008.

\bibitem[Wenliang et~al.(2019)Wenliang, Sutherland, Strathmann, and
  Gretton]{WenliangSSG18}
Wenliang, L., Sutherland, D., Strathmann, H., and Gretton, A.
\newblock \href{http://proceedings.mlr.press/v97/wenliang19a.html}{Learning
  deep kernels for exponential family densities}.
\newblock In \emph{ICML}, pp.\  6737--6746, 2019.

\end{thebibliography}

\newpage
\newpage
\clearpage

\twocolumn[
\icmltitle{Supplementary Material : Sum-of-Squares Polynomial Flow}
]
\appendix
\section{Simulated Experiments}
\label{app: simulated exp}

Here, we explore the effect of relative ordering for the conditioner network for SOS flows as well as mixture of Gaussians. We again generated two sets of 2D densities given by $p(x_1, x_2) = \Nc(x_2~; 0,4)\Nc(x_1~; 0.25x_2^2, 1)$ and $p(x_1, x_2) = \Nc(x_2~; 2,2)\Nc(x_1~; 1/3x_2^3, 1.5)$. However, we trained both SOS flows and GMMs with the reverse order \ie $x_1, x_2$. For SOS flows we again tested using both deep and wide flows whereas for MoGs we tested with varying number of components for each conditional. We present the plots in \Cref{fig:ordering experiment}. The best performance here is by a deep SOS flow. Furthermore, while a flat SOS flow is able to achieve almost the same geometrical shape as the target density, its learned density still differs from the true density. For mixture of Gaussians, a large number of components for each conditional improved the performance of the resulting model.  

\begin{figure}[h]
    \centering
    \includegraphics[width=8cm, keepaspectratio]{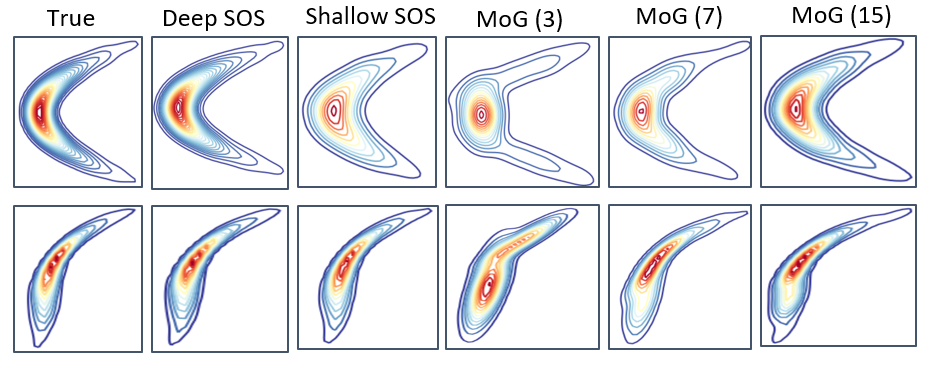}
    \caption{\textbf{Top:} Left plot shows the target density given by $p(x_1, x_2) = \Nc(x_2~; 0,4)\Nc(x_1~; 0.25x_2^2, 1)$. The second plot shows the density learnt by SOS flows with 3 blocks and a sum of 2 polynomials with degree 3 with ordering $(x_1,x_2)$. Third plot shows the density learnt by SOS flows with 1 block and a sum of 2 polynomials with degree 4 and ordering $(x_1, x_2)$. The last three plots  estimate this density using a Mixture of Gaussian conditionals with varying components given in parenthesis and ordering $(x_1, x_2)$. \textbf{Bottom:} Same as Top but with target density given by $p(x_1, x_2) = \Nc(x_2~;2,2)\Nc(x_1~;0.33x_1^3, 1.5)$.}
    \label{fig:ordering experiment}
\end{figure}

We also test the representational power of deep and wide SOS flows and the results are given in \Cref{fig:flat_deep}. In the first row, the true transformation was simulated by stacking multiple blocks of SOS transformation. Subsequently, we generated the target density using this transformation and estimated it using a deep flow, wide flow, wide-deep flow and mixture of Gaussians. In the second row, we simulated the true transformation using a single block SOS transformation and performed the same experiment as before. In both simulations, we tried to break our model by adding random noise to the coefficients of simulated transformation. As the figure shows, however, both deep and wide variants performed equally well in terms of representation. As expected however, the training time for wider flows was significantly longer than that for deeper flows.
\begin{figure}[t]
    \centering
    \includegraphics[width=8cm, keepaspectratio]{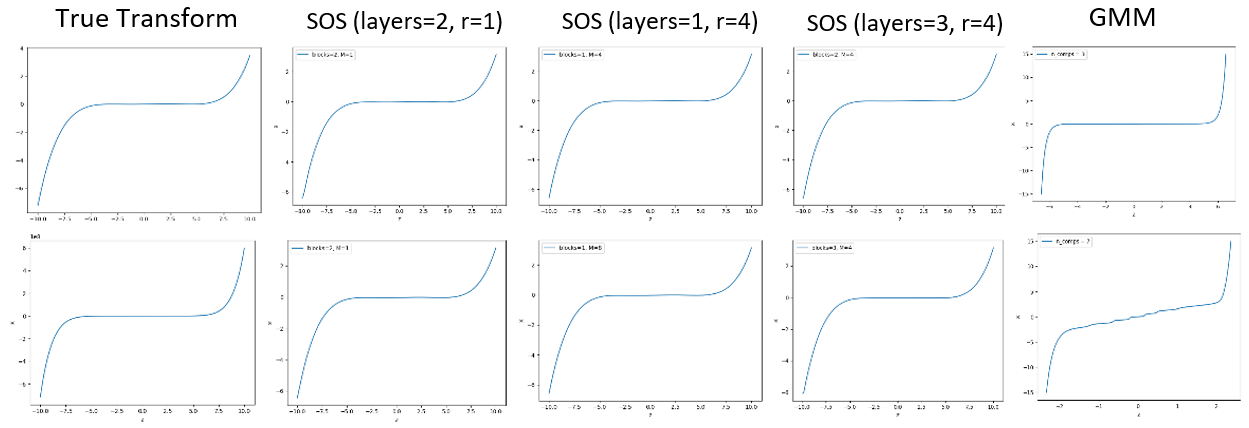}
    \caption{\textbf{Top Row:} Transformation defined by a deep SOS flow with $M=r=1$ and blocks =4. The next three plots show SOS flows learning this transformation with different configurations (deep, wide and, wide-deep). The last plot shows the transformation learned when a Gaussian mixture model learns the density (or transformation). \textbf{Bottom Row:} Same as Top Row but the true transformation was derived by a wide and shallow SOS flow with $M=r=4$ and blocks=1.}
    \label{fig:flat_deep}
\end{figure}

Finally, we tested SOS flows on a suite of 2D simulated datasets -- Funnel, Banana, Square, Mixture of Gaussians and Mixture of Rings. These datasets cover a broad range of geometries and have been considered before by \citet{WenliangSSG18}. For these experiments, we constructed our model by stacking 3 blocks with each block being a sum of \emph{two} polynomials each of degree \emph{four}. We plot the log density function learned by SOS flow and the true model in \Cref{fig:my_label}. The model is able to capture the true log density of datasets like Funnel and Banana. The true densities of Funnel and Banana are a simple linear transformation of Gaussians. Hence,  flow based models that learn a continuous and smooth transformation are expected to perform well on these datasets. However, SOS demonstrates certain artifacts at the sharp corners of the Square although it is able to capture the overall density nicely. These three datasets -- Funnel, Banana, and Square -- were part of the unimodal simulated datasets.

\begin{figure}[t]
    \centering
    \includegraphics[width=8cm]{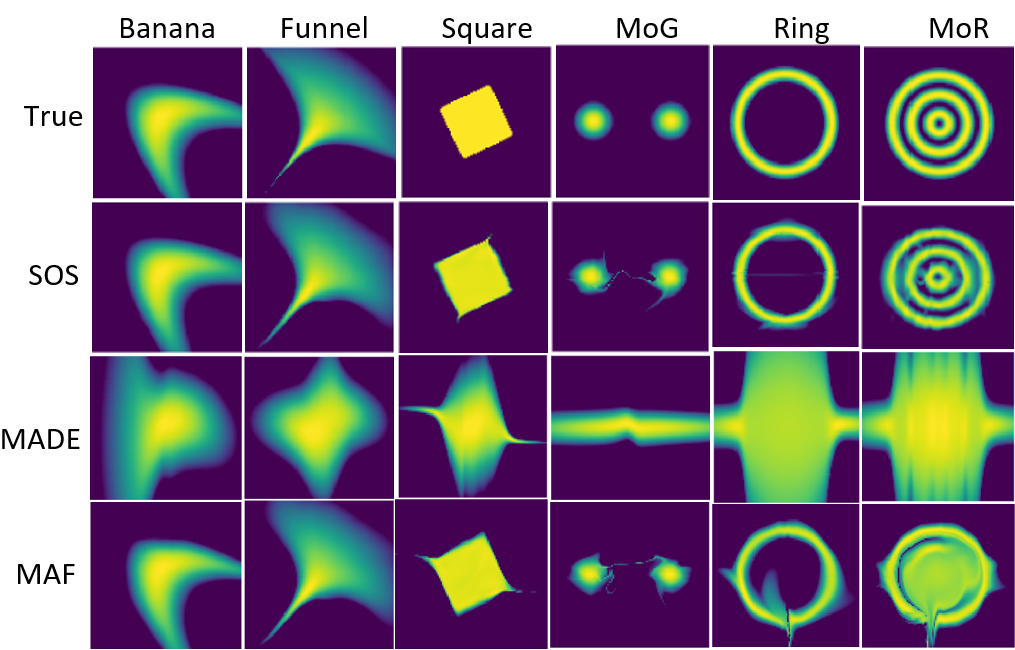}
    \caption{Log-densities for various toy-datasets. The top row shows the true log-densities. The next three rows are the log-densities for SOS flows, MADE and, MAF respectively.}
    \label{fig:my_label}
\vspace{-1em}
\end{figure}

The multimodal datasets included Mixture of Gaussians (MoG) and Mixture of Rings (MoR). As discussed earlier in \Cref{rem:disc}, when the target distribution has regions of near zero mass, the learned transformation admits sharp jumps to capture such regions. Flow based models by virtue of being invertible and smooth are often unable to learn such sharp jumps. SOS flows performs reasonably well for mixture of Gaussians although there are certain artifacts in the model that try to connect the two components. Similarly, there are some artifacts connecting the rings for the Mixture of Rings datasets. However, this issue of separated components can be dealt with relative ease in practice using clustering.   

\section{Transformation for Mixture of Gaussians}
\label{sec : mog trans}

\begin{figure*}[t]

\centering
\includegraphics[width=.3\textwidth]{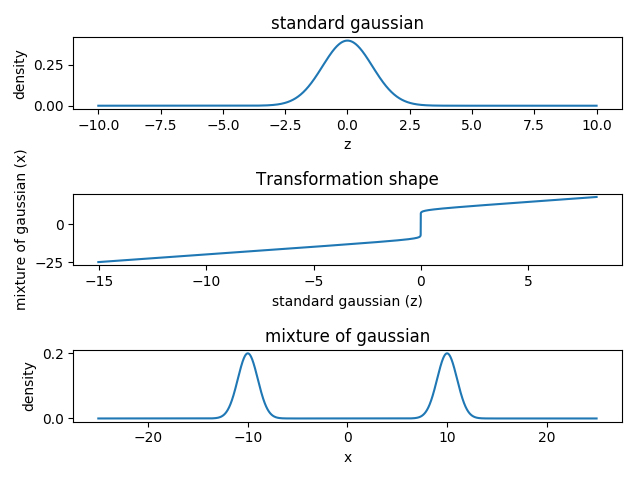}\hfill
\includegraphics[width=.3\textwidth]{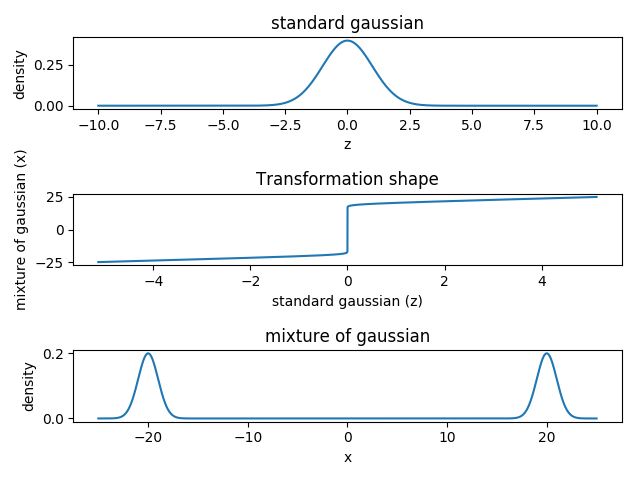}\hfill
\includegraphics[width=.3\textwidth]{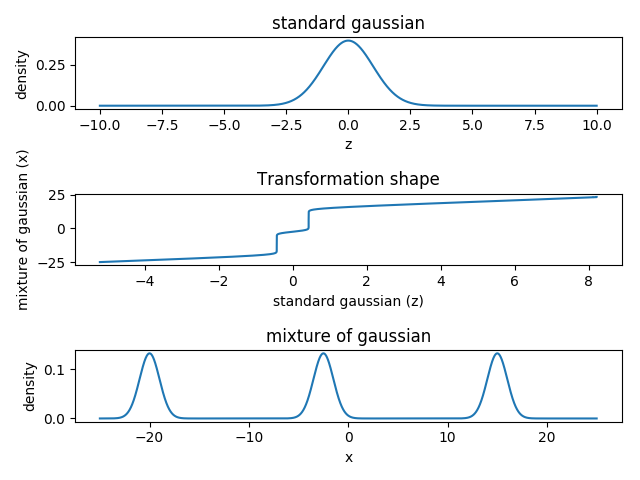}

\caption{Transformation curves from standard Gaussian to mixture of Gaussians.}
\label{fig:figure3}

\end{figure*}
The slope $T'(z)$ of $T$ at any point $z$ is given by
\begin{align}
    T'(z) &= \frac{p(z)}{q\Big(T(z)\Big)}, \quad \text{where} \quad x = T(z) \\
    &= \frac{p\Big(F^{-1}(t)\Big)}{q\Big(G^{-1}(t)\Big)}, \quad \text{where} \quad t = F(z)
\end{align}
 \ie the slope $T'(z)$ is the ratio of probability density quantiles (pdQs) of the source random variable and the target random variable.   

We now analyze the transformation required to transform a standard normal distribution to mixture of normal distributions. \Cref{fig:figure3} shows three columns of  plots: In the leftmost column, the top plot is the source distribution ($\Zsf$) which is standard normal. The bottom plot is the target distribution for the random variable $\Xsf$ which is a Gaussian mixture model with two components. The means are $-10$ and $10$, the variance is 1 and weights are 0.5 for each component respectively. The middle plot shows the transformation $T$ required to push forward a standard normal distribution to the target. In the second column of plots, we now transform a standard normal distribution to a mixture distribution but with means as -20 and 20, \ie the components are more separated. Finally, in the plots given in the rightmost column, we transform a standard normal distribution to a mixture of three Gaussians with means -20, -5, and 15. The variances are 1 and weights are $\frac{1}{3}$ respectively. 

We make the following observations here: In all three plots for the transformation, we notice that the transformation admits jumps (close to being vertical) \ie the slope at these points is large and close to infinity. This is expected since the regions where the target has almost zero mass but the source has finite mass would lead to a slope with such behavior. In the plots, this is the region in between the components where the mass of the target density approaches zero. Furthermore, the larger this area, the longer is the height of this jump (see plots on column one and column two). With densities that have two such areas, the transformation as expected has two jumps (plots on column three). The slope of $T$ on the extremes is a constant and is equal to the standard deviation of the component on that extreme. This is because:
\begin{align}
    \lim_{z\to \pm \infty} T'(z) = \lim_{z\to \pm\infty}\frac{p(z)}{q\Big(T(z)\Big)}
\end{align}
As $z \to \infty$, $q$ is approximately equal to the component on the positive extreme of the x-axis. This easily gives that $\lim_{z\to \infty} T'(z) = \sigma_{+}$ where $\sigma_{+}$ is the standard deviation of the component on the positive extreme of the x-axis; similarly, we get $\lim_{z\to -\infty} T'(z) = \sigma_{-}$ \ie $T'(z)$ is a constant in almost all the region of zero mass on the left of the component on the negative extreme and on the right of the positive extreme (verified in \Cref{fig:figure3}). Finally, the only regions where $T'(z)$ is finite is whenever $q(x) > q(\tilde{x})$ where $\tilde{x} \leq \mu_i \pm 2\sigma_i$ where the index $i$ stands for the $i^{th}$ component. Therefore, any $T$ that transforms a standard normal distribution to a mixture of Gaussians will be approximately piece-wise linear with jumps. The number of linear pieces in this transformation will be equal to the number of components in the mixture. The slopes of these linear pieces will be a function of the standard deviations of the mixture components. Additionally, the height of the jump will be a function of the mixing weights and standard deviation of the mixture components.

\section{Proofs}
\begin{lemma}[\citealt{MulanskyNeamtu98}]
Let $S$ be a dense subspace of $X$ and let $C\subseteq X$ be a convex set such that $\intr(C)\neq \emptyset$. Then $C\cap S$ is dense in $C$.
\end{lemma}
\begin{proof}
Since the interior $\intr(C)$ is open and nonempty, and $S$ is dense, we know $\intr(C) \cap S$ is dense in $\intr(C)$. (Every open set of $\intr(C)$ is also an open set of $X$, hence intersects the dense set $S$.) Moreover, since $C$ is convex and $\intr(C)\ne \emptyset$, we know $\cl(\intr(C)) = \cl(C)$, hence $\cl(\intr(C) \cap S) = \cl(C)$, \ie, $\intr(C)\cap S$, whence also the ``larger'' set $C\cap S$, is dense in $C$.
\end{proof}
\SOS*
\begin{proof}
Let us define $\Pc$ to be the space of polynomials, and $\Ic$ the space of increasing functions.
We need only prove on any compact set $K$, the set of polynomials of the form \eqref{eq:sos}, \ie $\Ic\cap\Pc$ thanks to \Cref{thm:sos}, is dense in $\Cc(K) \cap \Ic$. By Weierstrass' theorem we know $\Pc$ is dense in $\Cc(K)$. Moreover, the convex subset $\Ic \cap \Cc(K)$ has nonempty interior (take say a linear function with positive slope). Applying Lemma 1 above completes the proof.
\end{proof}

\end{document}